\DeclareMathAlphabet\mathbfcal{OMS}{cmsy}{b}{n}
\newcommand{\mat}[1]{\mathbf{#1}}
\def\eqref#1{equation~\ref{#1}}
\def\1{\bm{1}}
\DeclareMathAlphabet{\mathsfit}{\encodingdefault}{\sfdefault}{m}{sl}
\SetMathAlphabet{\mathsfit}{bold}{\encodingdefault}{\sfdefault}{bx}{n}
\DeclareMathAlphabet\mathbfcal{OMS}{cmsy}{b}{n}
\def\thickhline{%
  \noalign{\ifnum0=`}\fi\hrule \@height \thickarrayrulewidth \futurelet
   \reserved@a\@xthickhline}
\def\@xthickhline{\ifx\reserved@a\thickhline
               \vskip\doublerulesep
               \vskip-\thickarrayrulewidth
             \fi
      \ifnum0=`{\fi}}
\newlength{\thickarrayrulewidth}
\newcolumntype{P}[1]{>{\centering\arraybackslash}p{#1}}
\newcolumntype{M}[1]{>{\centering\arraybackslash}m{#1}}
\newtheorem*{rep@theorem}{\rep@title}
\newcommand{\newreptheorem}[2]{%
\newenvironment{rep#1}[1]{%
 \def\rep@title{#2 \ref{##1}}%
 \begin{rep@theorem}}%
 {\end{rep@theorem}}}
\newtheorem{Proposition}{Proposition}
\newtheorem{Lemma}[Proposition]{Lemma}
\newtheorem{Theorem}{Theorem}
\newtheorem{corollary}{Corollary}
\title{Towards Robust Neural Networks via Close-loop Control}
\author{Zhuotong Chen$^1$\textsuperscript{\textsection}, Qianxiao Li$^{2,3}$\textsuperscript{\textsection}, Zheng Zhang$^1$ 
\\
$^1$Department of Electrical \& Computer Engineering, University of California, Santa Barbara, CA 93106\\
$^2$Department of Mathematics, National University of Singapore, Singapore \\
$^3$Institute of High Performance Computing, A*STAR, Singapore \\
\texttt{ztchen@ucsb.edu, qianxiao@nus.edu.sg,  zhengzhang@ece.ucsb.edu} \\
}
\begin{document}

\maketitle

\begingroup
\renewcommand\thefootnote{\textsection}
\footnotetext{Equal contributing authors.}
\endgroup

\begin{abstract}
Despite their success in massive engineering applications, deep neural networks are vulnerable to various perturbations due to their black-box nature. Recent study has shown that a deep neural network can misclassify the data even if the input data is perturbed by an imperceptible amount. In this paper, we address the robustness issue of neural networks by a novel close-loop control method from the perspective of dynamic systems. Instead of modifying the parameters in a fixed neural network architecture, a close-loop control process is added to generate control signals adaptively for the perturbed or corrupted data. We connect the robustness of neural networks with optimal control using the geometrical information of underlying data to design the control objective. The detailed analysis shows how the embedding manifolds of state trajectory affect error estimation of the proposed method. Our approach can simultaneously maintain the performance on clean data and improve the robustness against many types of data perturbations. It can also further improve the performance of robustly trained neural networks against different perturbations. To the best of our knowledge, this is the first work that improves the robustness of neural networks with close-loop control \footnote{A Pytorch implementation can be found in:\url{https://github.com/zhuotongchen/Towards-Robust-Neural-Networks-via-Close-loop-Control.git}}. 
\end{abstract}

\section{Introduction}
Due to the increasing data and computing power, deep neural networks have achieved state-of-the-art performance in many applications such as computer vision, natural language processing and recommendation systems. However, many deep neural networks are vulnerable to various malicious perturbations due to their black-box nature: a small (even imperceptible) perturbation of input data may lead to completely wrong predictions~\citep{szegedy2013intriguing, nguyen2015deep}. This has been a major concern in some safety-critical applications such as autonomous driving \citep{grigorescu2020survey} and medical image analysis \citep{lundervold2019overview}. Various perturbations have been reported, including the $\ell_p$ norm based attack \citep{madry2017towards, moosavi2016deepfool, carlini2017towards}, semantic perturbation \citep{engstrom2017rotation} etc. On the other side, some algorithms to improve the robustness against those perturbations have shown great success \citep{madry2017towards}. However, most robustly trained models are tailored for certain types of perturbations, and they do not work well for other types of perturbations. \cite{khoury2018geometry} showed the non-existence of optimal decision boundary for any $\ell_p$-norm perturbation.

Recent works~\citep{weinan2017proposal, haber2017stable} have shown the connection between dynamical systems and neural networks. This dynamic system perspective provides some interesting theoretical insights about the robustness issue. Given a set of data $\mat{x}_0 \in \mathbb{R}^d$ and its labels $\mat{y} \in \mathbb{R}^l$ with a joint distribution $\mathcal{D}$, training a neural network can be considered as following
\begin{equation*}
    \min \limits_{\boldsymbol\theta} \mathop{\mathbb{E}}_{(\mat{x}_0, \mat{y}) \sim \mathcal{D}} [\Phi(\mat{x}_T, \mat{y})], \hspace{0.3cm} \text{s.t.}~~ \mat{x}_{t+1} = f(\mat{x}_t, \boldsymbol\theta_t),
\end{equation*}
where $\boldsymbol\theta$ are the unknown parameters to train, and $f$, $\Phi$ represent the forward propagation rule  and loss function (e.g. cross-entropy) respectively. The dynamical system perspective interprets the vulnerability of neural networks as a system instability issue, which addresses the state trajectory variation under small perturbations applied on initial conditions. The optimal control theory focuses on developing a control model to adjust the system state trajectory in an optimal manner. The first work that links and extends the classical back-propagation algorithm using optimal control theory was presented in \cite{li2017maximum}, where the direct relationship between the Pontryagin's Maximum Principle \citep{kirk1970optimal} and the gradient based network training was established. \cite{ye2019amata} used control theory to adjust the hyperparameters in the adversarial training algorithm. \cite{han2018mean} established the mathematical basis of the optimal control viewpoint of deep learning. These existing works on algorithm development are {\it open-loop control methods} since they commonly treat the network weights $\boldsymbol\theta$ as control parameters and keep them fixed once the training is done. The fixed control parameters $\boldsymbol\theta$ operate optimally for data sampled from the data distribution $\mathcal{D}$. However, various perturbation methods cause data distributions to deviate from the true distribution $\mathcal{D}$ \citep{song2017pixeldefend} and cause poor performance with the fixed open-loop control parameters.

\subsection{Paper Contributions}
\begin{figure}[t]
\vspace{-15pt}
    \centering
    \begin{tikzpicture}
    \node (image) at (0,0){\includegraphics[scale=0.5]{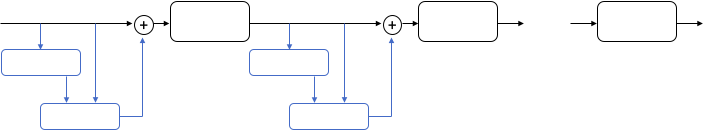}};
    \draw node at (-5.9,0.9) {$\mat{x}_0$};
    \draw node at (-5.5,0.05) {$\color{blue}\mathcal{E}_0(\mat{x}_0)$};
    \draw node at (-4.8,-0.9) {\color{blue}Cont$_0$};
    \draw node at (-4,0.3) {$\color{blue}\mat{u}_0$};
    \draw node at (-2.5,0.8) {Layer$_0$};
    \draw node at (-5.9+4.4,0.9) {$\mat{x}_1$};
    \draw node at (-5.5+4.4,0.05) {$\color{blue}\mathcal{E}_1(\mat{x}_1)$};
    \draw node at (-4.8+4.4,-0.9) {\color{blue}Cont$_1$};
    \draw node at (-4+4.4,0.3) {$\color{blue}\mat{u}_1$};
    \draw node at (-2.5+4.4,0.8) {Layer$_1$};
    \draw node at (3.5,0.8) {$\cdots$};
    \draw node at (5.1,0.8) {Layer$_T$};
    \end{tikzpicture}
    \caption{The structures of feed-forward neural network (black) and the proposed method (blue).}
    \label{fig:network_structure}
    \vspace{-10pt}
\end{figure}

To address the limitation of using {\it open-loop control methods}, we propose the $\textbf{C}$lose- $\textbf{L}$oop $\textbf{C}$ontrol $\textbf{N}$eural $\textbf{N}$etwork  ($\textbf{CLC-NN}$), the first close-loop control method to improve the robustness of neural networks. As shown in Fig.~\ref{fig:network_structure}, our method adds additional blocks to a given $T$-layer neural network: embedding functions $\mathcal{E}_t$, which induce running losses in all layers that measure the discrepancies between true features and observed features under input perturbation, then control processes generate control variables $\mat{u}_t$ to minimize the total running loss under various data perturbations. The original neural network can be designed by either standard training or robust training. In the latter case, our CLC-NN framework can achieve extra robustness against different perturbations. The forward propagation rule is thus modified with an extra control parameter $\mat{u}_t \in \mathbb{R}^{d'}$
\begin{equation*}
    \mat{x}_{t+1} = f(\mat{x}_t, \boldsymbol\theta_t, \mat{u}_t).
\end{equation*}
Fig.~\ref{fig:network_structure} should not be misunderstood as an open-loop control. From the perspective of dynamic systems, $\mat{x}_0$ 
 is an initial condition, and the excitation input signal is $\mat{u}_t$ 
 (which is $0$ in a standard feed-forward network). Therefore, the forward signal path is from $\mat{u}_t$ to the internal states $\mat{x}_t$ and then to the output label $\mat{y}$. The path from $\mat{x}_t$ to the embedding function $\mathcal{E}_t(\mat{x}_t)$ and then to the excitation signal $\mat{u}_t$
 forms a feedback and closes the whole loop.
 
The technical contributions of this paper are summarized below:
\begin{itemize}[leftmargin=*]
    \item The proposed method relies on the well accepted assumption that the data and hidden state manifolds are low dimensional compared to the ambient dimension \citep{fefferman2016testing}. We study the geometrical information of the data and hidden layers to define the objective function for control. Given a trained $T$-layer neural network, a set of embedding functions $\mathcal{E}_t$ are trained off-line by minimizing the reconstruction loss $\lVert \mathcal{E}(\mat{x}_t) - \mat{x}_t \rVert$ over some clean data from $\mathcal{D}$ only. The embedding functions support defining a running loss required in our control method.
    \item We define the control problem by dynamic programming and implement the online iterative solver based on the Pontryagin's Maximum Principle to avoid the curse of dimensionality. The proposed close-loop control formulation does not require prior information of the perturbation. 
    \item We provide a theoretical error bound of the controlled system for the simplified case with linear activation functions and linear embedding. This error bound reveals how the close-loop control improves neural network robustness in the simplest setting.
\end{itemize}

\section{Related Works}
Many techniques have been reported to improve the robustness of neural networks, such as data augmentation \citep{shorten2019survey}, gradient masking \citep{liu2018towards}, etc. We review adversarial training and reactive defense which are most relevant to this work. 

\vspace{-5pt}
\paragraph{Adversarial Training.} Adversarial training is (possibly) the most popular robust training method, and it solves a min-max robust optimization problem
to minimize the worse-case loss with perturbed data. Adversarial training effectively regularizes the network's local Lipschitz constants of the loss surface around the data manifold \citep{liu2018towards}. \cite{zhang2019you} formulated the robustness training using the Pontryagon's Maximum Principle, such {\it open-loop control methods} result in a set of fixed parameters that operates optimally on the considered perturbation. \cite{liu2020differential, liu2020differential_} considered a close-loop formulation from the differential dynamic programming perspective, this algorithm is categorized as a {\it open-loop control method} because it utilizes the state feedback information to boost the training convergence and results in a set of fixed controls for any unseen data. On the contrary, the proposed CLC-NN formulation adaptively targets on the inputs with different control parameters and is capable of distinguishing clean data by generating no control.

\vspace{-5pt}
\paragraph{Reactive Defense.} A reactive defense method tries to reject or pre-process the input data that may cause mis-classifications. \cite{metzen2017detecting} rejected perturbed data by using adversarial detectors that are trained with adversarial data to detect abnormal data during forward propagation. \cite{song2017pixeldefend} estimated the input data distribution $\mathcal{D}$ with a generative model \citep{oord2016pixel} to detect data that does not belong to $\mathcal{D}$, it applies a greedy method to search the local neighbour of input data for a more statistically plausible counterpart. This purification process has shown improved accuracy with adversarial data contaminated by various types of perturbations. Purification can be considered as a one-step method to solve the optimal control problem that has the objective function defined over the initial condition only. On the contrary, the proposed CLC-NN solves the control problem by the dynamic programming principle and its objective function is defined over the entire state trajectory, which guarantees the optimality for the resulted controls.

\section{The Close-loop Control Framework for Neural Networks}
Now we present a close-loop optimal control formulation to address the robustness issue of deep learning. Consider a neural network consisting of model parameters $\boldsymbol\theta$ equipped with external control policy $\boldsymbol\pi$, where $\boldsymbol\pi \in \boldsymbol\Pi$ is a collection of functions $\mathbb{R}^d \rightarrow \mathbb{R}^{d'}$ acting on the state and outputting the control signal. The feed-forward propagation in a $T$-layer neural network can be represented as
\begin{equation}
    \mat{x}_{t+1} = f(\mat{x}_t, \boldsymbol\theta_t, \boldsymbol\pi_t(\mat{x}_t)), \hspace{0.3cm} t = 0,\cdots,T-1.
    \label{eq:difference_equation}
\end{equation}
Given a trained network, we solve the following optimization problem
\begin{equation}
    \min \limits_{\overline{\boldsymbol\pi}} \mathbb{E}_{(\mat{x}_0, \mat{y}) \sim \mathcal{D}} \left[ J(\mat{x}_0, \mat{y}, \overline{\boldsymbol\pi}) \right] \vcentcolon = \min \limits_{\overline{\boldsymbol\pi}} \mathbb{E}_{(\mat{x}_0, \mat{y}) \sim \mathcal{D}} \left[ \Phi(\mat{x}_T, \mat{y}) + \sum_{s=0}^{T-1} {\cal L}(\mat{x}_s, \boldsymbol\pi_s(\mat{x}_s)) \right], \;\; s.t. ~\text{Eq.}~(\ref{eq:difference_equation}),
    \label{eq:population_risk_min}
\end{equation}
where $\overline{\boldsymbol\pi}$ collects the control policies $\boldsymbol\pi_0, \cdots, \boldsymbol\pi_{T-1}$ for all layers. Note that (\ref{eq:population_risk_min}) differs from the open-loop control used in standard training. An open-loop control that treats the network parameters as control variables seeks for a set of {\it fixed} parameters $\boldsymbol\theta$ to match the output with true label $\mat{y}$ by minimizing the terminal loss $\Phi$, and  the running loss ${\cal L}$ defines a regularization for $\boldsymbol\theta$. However, the terminal and running losses play different roles when our goal is to improve the robustness of a neural network by generating some adaptive controls for different inputs. 

\vspace{-5pt}
\paragraph{Challenge of Close-loop Control for Neural Networks.} Optimal control has been well studied in the control community for trajectory optimization, where one defines the running loss as the error between the actual state $\mat{x}_t$ and a reference state $\mat{x}_{t,\text{ref}}$ over time interval $[0,T]$. The resulting control policy adjusts $\mat{x}_t$ and makes it approach $\mat{x}_{t,\text{ref}}$. In this paper, we apply the idea of trajectory optimization to improve the robustness of a neural network via adjusting the undesired state of $\mat{x}_t$. However, the formulation is more challenging in neural networks: we do not have a ``reference" state during the inference process, therefore it is unclear how to define the running loss ${\cal L}$.

In the following, we investigate manifold embedding of the state trajectory to precisely define the loss functions $\Phi$ and ${\cal L}$ of Eq.~(\ref{eq:population_risk_min}) required for the control objective function of a neural network.

\subsection{Manifold Learning for State Trajectories}

\paragraph{State Manifold.} Our controller design is based on the ``manifold hypothesis": real-world high dimensional data can often be embedded in a lower dimensional manifold $\mathcal{M}$ \citep{fefferman2016testing}. Indeed, neural networks extract the embedded features from $\mathcal{M}$. To fool a well-trained neural network, the perturbed data often stays away from the data manifold $\mathcal{M}$ \citep{khoury2018geometry}. We consider the data space $Z$ ($\mat{x} \in Z, \forall \mat{x} \sim \mathcal{D}$) as: $Z = Z_{\parallel} \bigoplus Z_{\perp}$, where $Z_{\parallel}$ contains the embedded manifold $\mathcal{M}$ and $Z_{\perp}$ is the orthogonal complement of $Z_{\parallel}$. During forward propagation, the state manifold embedded in $Z_{\parallel}$ varies at different layers due to both the nonlinear activation function $f$ and state dimensionality variation. Therefore, we denote $Z^t = Z_{\parallel}^t \bigoplus Z_{\perp}^t$ as the state space decomposition at layer $t$ and $\mathcal{M}_t \in Z_{\parallel}^t$. Once an input data is perturbed, the main effects of causing misclassifications are in $Z_{\perp}$. Therefore, it is important to measure how far the possibly perturbed state $\mat{x}_t$ deviates from the state manifold $\mathcal{M}_t$.

\vspace{-5pt}
\paragraph{Embedding Function.} Given an embedding function $\mathcal{E}_t$ that encodes $\mat{x}_t$ onto the lower-dimensional manifold $\mathcal{M}_t$ and decodes the result back to the full state space $Z_t$, the reconstruction loss $\lVert \mathcal{E}_t(\mat{x}_t) - \mat{x}_t \rVert$ measures the deviation of the possibly perturbed state $\mat{x}_t$ from the manifold $\mathcal{M}_t$. The reconstruction loss is nonzero as long as $\mat{x}_t$ has components in $Z_{\perp}^t$. The embedding functions are constructed offline by minimizing the total reconstruction losses over a clean training data set.
\begin{itemize}[leftmargin=*]
    \item {\bf Linear Case:} $\mathcal{E}_t(\cdot)$ can be considered as $\mat{V}_t^r (\mat{V}_t^r)^T$ where $\mat{V}_t^r$ forms an orthonormal basis for $Z_{\parallel}^t$. Specifically one can first perform a principle component analysis over a collection of hidden states at layer $t$, then $\mat{V}_t^r$ can be obtained as the first $r$ columns of the resulting eigenvectors.
    \item {\bf Nonlinear Case:} we choose a convolutional auto-encoder (detailed in Appendix \ref{appendixC}) to obtain a representative manifold embedding function ${\cal E}_t$ due to its ease of implementation. Based on the assumption that most perturbations are in the $Z_{\perp}$ subspace, the embeddings are effective to detect the perturbations as long as the target manifold is of a low dimension. Alternative manifold learning methods such as \cite{izenman2012introduction} may also be employed.
\end{itemize}


\subsection{Formulation for the Close-Loop Control of Neural Networks}
\label{Algorithm Intuition}

\paragraph{Control Objectives.} The above embedding function allows us to define a {\bf running loss} ${\cal L}$:
\begin{equation}
    {\cal L}(\mat{x}_t, \boldsymbol\pi_t(\mat{x}_t), \mathcal{E}_t(\cdot)) = \lVert \mathcal{E}_t(\mat{x}_t) - \mat{x}_t \rVert_2^2 + (\boldsymbol\pi_t(\mat{x}_t))^T \mat{R} \boldsymbol\pi_t(\mat{x}_t).
    \label{eq:running_loss}
\end{equation}
Here the matrix $\mat{R}$ defines a regularization term promoting controls of small magnitudes. In practical implementations, using a diagonal matrix $\mat{R}$ with small elements often helps to improve the performance. Now we are ready to design the control objective function of CLC-NN. Different from a standard open-loop control, this work sets the terminal loss $\Phi$ as zero because no true label is given during inference. Consequently, the {\bf close-loop control} formulation in Eq.~(\ref{eq:population_risk_min}) becomes
\begin{align}
    \min \limits_{\overline{\boldsymbol\pi}} \mathbb{E}_{(\mat{x}_0, \mat{y}) \sim \mathcal{D}} \left[ J(\mat{x}_0, \mat{y}, \overline{\boldsymbol\pi}) \right] \vcentcolon = \min \limits_{\overline{\boldsymbol\pi}} \mathbb{E}_{(\mat{x}_0, \mat{y}) \sim \mathcal{D}} \sum_{t=0}^{T-1} \left[ {\cal L}(\mat{x}_t, \boldsymbol\pi_t(\mat{x}_t), \mathcal{E}_t(\cdot)) \right], \hspace{0.3cm} s.t. ~Eq.~(\ref{eq:difference_equation}).
    \label{eq:nn_performance_index}
\end{align}
Assume that the input data is perturbed by a bounded and small amount, i.e., 
\begin{equation*}
    \mat{x}_{\epsilon, 0} = \mat{x}_0 + \epsilon \cdot \mat{z},
\end{equation*}
where $\mat{z}$ can be either random or adversarial. The proposed CLC-NN adjusts the perturbed state trajectory $\mat{x}_{\epsilon, t}$ such that it stays at a minimum distance from the desired manifold $\mathcal{M}_t$ while promoting small magnitudes of controls.

\vspace{-5pt}
\paragraph{Intuition.} We use an intuitive example to show how CLC-NN controls the state trajectory of unseen data samples. We create a synthetic binary classification data set with $1500$ samples. 
We train a residual neural network with one hidden layer of dimension $2$, and adopt the fast gradient sign method \citep{goodfellow2014explaining} to generate adversarial data. Fig.~\ref{fig:toy_experiment} (a) and (b) show the states of clean data (red and blue) and of perturbed data (black and gray) at $t=0$ and $t=1$, respectively.  The CLC-NN adjusts the state trajectory to reduce the reconstruction loss as shown in Fig.~\ref{fig:toy_experiment} (c) and (d), where lighter background color represents lower reconstruction loss. Comparing Fig.~\ref{fig:toy_experiment} (a) with (c), and Fig.~\ref{fig:toy_experiment} (b) with (d), we see that the perturbed states in Fig.~\ref{fig:toy_experiment} (a) and (b) deviate from the desired state manifold (light green region) and has a high reconstruction loss. Running $1000$ iterations of Alg.~\ref{alg:pmp_solver} adjusts the perturbed states and improves the classification accuracy from $86 \%$ to $100 \%$.
   
\begin{figure}[t]
	\centering
	\begin{minipage}{.245\linewidth}
	\centering
	    \includegraphics[width = \linewidth]{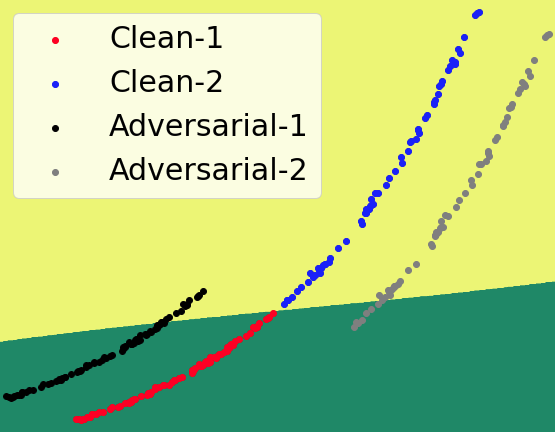}
	    (a) {\small $\mat{x}_0, \mat{x}_{\epsilon,0}$}
	\end{minipage}
		\begin{minipage}{.245\linewidth}
		\centering
	    \includegraphics[width=\linewidth]{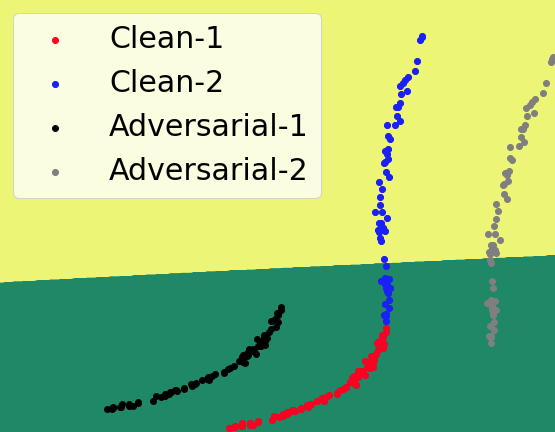}
	    (b) {\small $\mat{x}_1, \mat{x}_{\epsilon,1}$}
	\end{minipage}
	\begin{minipage}{.245\linewidth}
	\centering
		\includegraphics[width=\linewidth]{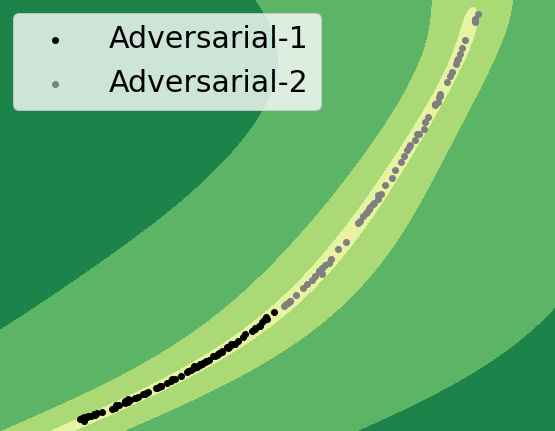}
		(c) {\small Controlled $\overline{\mat{x}}_{\epsilon, 0}$}
	\end{minipage}
	\begin{minipage}{.245\linewidth}
	\centering
		\includegraphics[width=\linewidth]{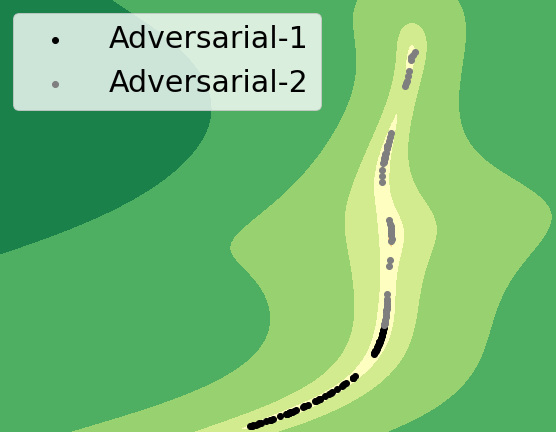}
		(d) {\small Controlled $\overline{\mat{x}}_{\epsilon, 1}$}
	\end{minipage}
    \caption{(a) and (b) show the states of clean data (red and blue) and of perturbed data (black and gray) at the initial and hidden layers respectively. The yellow and green backgrounds represent the two classes predicted by the network. Some of the perturbed data are mis-classified. (c) and (d) show the adjusted states after the proposed close-loop control. The background lightness from light to dark represent the increasing reconstruction loss $\lVert \mathcal{E}_t(\mat{x}_t) - \mat{x}_t \rVert_2^2$.}
    \label{fig:toy_experiment}
    \vspace{-5pt}
\end{figure} 

\section{Implementation via the Pontryagin's Maximum Principle}

\paragraph{Dynamic Programming for Close-Loop Control~(\ref{eq:nn_performance_index}).} The control problem in Eq.~(\ref{eq:nn_performance_index}) can be solved by the dynamical programming principle \citep{bellman1952theory}. 
For simplicity we consider one input data sample, and define a value function $V: \mathcal{T} \times \mathbb{R}^d \rightarrow \mathbb{R}$  (where $\mathcal{T}:=\{0,1,\dots,T-1\}$). Here $V(t,\mat{x})$ represents the optimal cost-to-go function of Eq.~(\ref{eq:nn_performance_index}) incurred from time $t$ at state $\mat{x}$. One can show that $V(t,\mat{x})$ satisfies the dynamic programming principle
\begin{equation}
    V(t, \mat{x}) = \inf\limits_{\boldsymbol\pi \in \boldsymbol\Pi}
    \left[
        V(t+1,  \mat{x} + f(\mat{x}, \boldsymbol\theta_t, \boldsymbol\pi(\mat{x}))) + {\cal L}(\mat{x}, \boldsymbol\pi(\mat{x}), \mathcal{E}_t(\cdot))
    \right].
    \label{eq:erm_value_function}
\end{equation}
Eq.~({\ref{eq:erm_value_function}}) gives a necessary and sufficient condition for optimality of Eq.~(\ref{eq:nn_performance_index}), and it is often solved backward in time by discretizing the entire state space. The state dimension of a modern neural network is at the order of thousands or even higher, therefore, discretizing the state space and directly solving Eq.~(\ref{eq:erm_value_function}) is intractable for real-world applications due to the curse of dimensionality.

\vspace{-5pt}
\paragraph{Solving (\ref{eq:erm_value_function}) via the Pontryagin's Maximum Principle.} To overcome the computational challenge, the Pontryagin's Maximum Principle \citep{kirk1970optimal} converts the intractable dynamical programming into two ordinary differential equations and a maximization condition. Instead of computing the control policy $\boldsymbol\pi$ of Eq.~(\ref{eq:erm_value_function}), the Pontryagin's Maximum Principle provides a necessary condition for the optimality with a set of control parameters [$\mat{u}_0^{\ast}, \cdots, \mat{u}_T^{\ast}$]. The mean-field Pontryagin's Maximum Principle can be considered when the initial condition is a batch of i.i.d. samples drawn from $\mathcal{D}$. Specifically, we trade the intractable computational complexity with processing time for solving the Hamilton equations and its maximization condition for every newly observed data. To begin with, we define the Hamiltonian $H : \mathcal{T} \times \mathbb{R}^d \times \mathbb{R}^d \times \mathbb{R}^l \times \mathbb{R}^m \rightarrow \mathbb{R}$ as
\begin{equation}
    H(t, \mat{x}_t, \mat{p}_{t+1}, \boldsymbol\theta_t, \mat{u}_t) \vcentcolon = \mat{p}_{t+1}^{T} \cdot f(\mat{x}_t, \boldsymbol\theta_t, \mat{u}_t) - {\cal L}(\mat{x}_t, \mat{u}_t, \mathcal{E}_t(\cdot)).
    \label{eq:hamiltonian}
\end{equation}
Let $\mat{x}^{\ast}$ denote the corresponding optimally controlled state trajectory. There exists a co-state process $\mat{p}^{\ast}: [0,T] \rightarrow \mathbb{R}^d$ such that the Hamilton's equations
\begin{align}
    & \mat{x}_{t+1}^{\ast} = \nabla_p H(t, \mat{x}_t^{\ast}, \mat{p}_t^{\ast}, \boldsymbol\theta_t, \mat{u}_t^{\ast}), &(\mat{x}_0^{\ast}, \mat{y}) \sim \mathcal{D},
    \label{eq:pmp_forward}\\
    & \mat{p}_t^{\ast} = \nabla_x H(t, \mat{x}_t^{\ast}, \mat{p}_{t+1}^{\ast}, \boldsymbol\theta_t, \mat{u}_t^{\ast}), &\mat{p}_T^{\ast} = \mat{0},
    \label{eq:pmp_backward}
\end{align}
are satisfied. The terminal co-state $\mat{p}_T = \mat{0}$, since we do not consider the terminal loss $\Phi(\mat{x}_T, \mat{y})$. Moreover, we have the Hamiltonian maximization condition
\begin{equation}
    H(t, \mat{x}_t^{\ast}, \mat{p}_t^{\ast}, \boldsymbol\theta_t, \mat{u}_t^{\ast}) \geq H(t, \mat{x}_t^{\ast}, \mat{p}_t^{\ast}, \boldsymbol\theta_t, \mat{u}_t),  \forall \mat{u}\in\mathbb{R}^{d'} \text{ and } \forall t \in \mathcal{T}.
    \label{eq:pmp_maximization}
\end{equation}

Instead of solving Eq.~(\ref{eq:erm_value_function}) for the optimal control policy $\boldsymbol\pi^{\ast}(\mat{x}_t)$, for a given initial condition, the Pontryagin's Maximum Principle seeks for a open-loop optimal solution such that the global optimum of Eq.~(\ref{eq:erm_value_function}) is satisfied.
The limitation of using the maximum principle is that the control parameters $\mat{u}_t^{\ast}$ need to be solved for every unseen data to achieve the optimal solution.

\vspace{-5pt}
\paragraph{Algorithm Flow.} The numerical implementation of CLC-NN is summarized in Alg.~\ref{alg:pmp_solver}. Given a trained network (either from standard or adversarial training) and a set of embedding functions, the controls are initialized as $\mat{u}_t = \mat{0}, \forall t \in \mathcal{T}$, because adding random initialization weakens the robustness performance in general, and clean trajectory often does not result in any running loss for the gradient update on the control parameters. In every iteration, a given input $\mat{x}_0$ is propagated forwardly with Eq.~(\ref{eq:pmp_forward}) to obtain all the intermediate hidden states $\mat{x}_t$ for all $t$ and to accumulate cost $J$. Eq.~(\ref{eq:pmp_backward}) backward propagates the co-state $\mat{p}_t$ and Eq.~(\ref{eq:pmp_maximization}) maximizes the $t^{th}$ Hamiltonian with current $\mat{x}_t$ and $\mat{p}_t$ to compute the optimal control parameters $\mat{u}_t^{\ast}$.

\vspace{-5pt}
\begin{algorithm}[t]
  \caption{CLC-NN with the Pontryagin's Maximum Principle.}
  \label{alg:pmp_solver}
  \KwIn{Possibly perturbed data $\mat{x}_{\epsilon}$, a trained neural network, embedding functions [$\mathcal{E}_1, \cdots, \mathcal{E}_{T-1}]$, ${\rm maxItr}$ (maximum number of iterations).}
  \KwOut{A set of optimal control parameters $\mat{u}_0^{\ast},\cdots,\mat{u}_{T-1}^{\ast}$.}
  \For{$k = 0$ to ${\rm maxItr}$}
    {$J_k = 0$, \\
    \For{$t=0$ to $T-1$}
    {$\mat{x}_{t+1,k} = f(\mat{x}_{t,k}, \boldsymbol\theta_t, \mat{u}_{t,k})$, ~ where $\mat{x}_{0,k} = \mat{x}_{\epsilon},$ \Comment{Forward propagation Eq.~(\ref{eq:pmp_forward})}, \\
    $J_k = J_k + {\cal L}(\mat{x}_{t, k}, \mat{u}_{t, k}, \mathcal{E}_t(\mat{x}_{t, k}))$, \Comment{Objective function Eq.~(\ref{eq:nn_performance_index}),}}
    \For{$t=T$ to $1$}
    {$\mat{p}_{t,k} = \mat{p}_{t+1}^T \cdot \nabla_{\mat{x}_t} f(\mat{x}_{t,k}, \boldsymbol\theta_t, \mat{u}_{t,k}) - \nabla_{\mat{x}_t} {\cal L}(\mat{x}_{t, k}, \mat{u}_{t, k}, \mathcal{E}_t(\mat{x}_{t, k})) $, ~ \\ where $\mat{p}_{T, k} = \mat{0},$ \Comment{Backward propagation Eq.~(\ref{eq:pmp_backward})}}
    \For{$t=0$ to $T-1$}
    {$\mat{u}_{t,k+1} = \mat{u}_{t,k} + \big( \mat{p}_{t+1,k}^T \cdot \nabla_{\mat{u}_t} f(\mat{x}_{t,k}, \boldsymbol\theta_t, \mat{u}_{t,k}) - \nabla_{\mat{u}_t} {\cal L} (\mat{x}_{t,k}, \mat{u}_{t, k}, \mathcal{E}_t(\mat{x}_{t, k})) \big)$, \\ \Comment{Maximization of Hamiltonian Eq.~(\ref{eq:pmp_maximization}) based on Eq.~(\ref{eq:hamiltonian}) and gradient ascent.} }}
\end{algorithm}     \vspace{-5pt}

\section{Error Analysis for Simplified Linear Cases}
\label{analysis}
For the ease of analysis, we consider a simplified neural network with linear activation functions: 
\begin{equation*}
    \mat{x}_{t+1} = \boldsymbol\theta_t (\mat{x}_t + \mat{u}_t),
\end{equation*}
and reveal why our proposed method can improve robustness in the simplest setting. Given a perturbed data sample  $\overline{\mat{x}}_{\epsilon, 0}$, we denote its perturbation-free counterpart as $\mat{x}_0$ so that $\mat{z} = \overline{\mat{x}}_{\epsilon, 0} - \mat{x}_0$. We consider a general perturbation where $\mat{z}$ is the direct sum of two orthogonal contributions: $\mat{z}^{\parallel}$, which is a perturbation within the data manifold (subspace), and $\mat{z}^{\perp}$, which is a perturbation in the orthogonal complement of the data manifold. This case is general: if we consider adversarial attacks, then the perturbation along the orthogonal complement dominates. In contrast, if we consider random perturbations, then the two perturbations are on the same scale. Our formulation covers both such extreme scenarios, together with intermediate cases.

We use an orthogonal projection as the embedding function such that $\mathcal{E}_t = \mat{V}_t^r (\mat{V}_t^r)^T$, where $\mat{V}_t^r$ is the first $r$ columns of the eigenvectors computed by the Principle Component Analysis on a collection of states $\mat{x}_t$. The proposed CLC-NN minimizes $\lVert \overline{\mat{x}}_{\epsilon, t} - \mat{x}_t \rVert_2^2$ by reducing the components of $\mat{x}_{\epsilon, t}$ that lie in the the orthogonal complement of $Z_{\parallel}^t$. The following theorem provides an error estimation between $\overline{\mat{x}}_{\epsilon, t}$ and $\mat{x}_{t}$.
\begin{Theorem}
\label{Theorem1}
For $t \geq 1$, we have the error estimation
\begin{equation}
\label{eq:theorem1}
    \lVert \overline{\mat{x}}_{\epsilon, t} - \mat{x}_{t} \rVert_2^2 \leq \lVert \boldsymbol\theta_{t-1} \cdots \boldsymbol\theta_{0} \rVert_2^2 \cdot \bigg( \alpha^{2t} \lVert \mat{z}^{\perp} \rVert_2^2 + \lVert \mat{z}^{\parallel} \rVert_2^2 + \gamma_t \lVert \mat{z} \rVert_2^2 \big( \gamma_t \alpha^2 (1 - \alpha^{t-1})^2 + 2 (\alpha - \alpha^t) \big) \bigg),
\end{equation}
where $\gamma_t \vcentcolon = \max \limits_{s \leq t} \big(1 +  \kappa (\overline{\boldsymbol\theta}_s)^2 \big) \lVert \mat{I} - \overline{\boldsymbol\theta}_s^T \overline{\boldsymbol\theta}_s \rVert_2$, and $\alpha = \frac{c} {1 + c}$, $c$ represents the control regularization. In particular, the equality
\begin{equation}
\label{eq:theorem1_orth}
    \lVert \overline{\mat{x}}_{\epsilon, t} - \mat{x}_{t} \rVert_2^2 =  \alpha^{2t} \lVert \mat{z}^{\perp} \rVert_2^2 + \lVert \mat{z}^{\parallel} \rVert_2^2,
\end{equation}
holds when all $\boldsymbol\theta_t$ are orthogonal.
\end{Theorem}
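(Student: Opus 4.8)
The plan is to reduce the statement to a linear recursion for the error $\mat{e}_t \vcentcolon= \overline{\mat{x}}_{\epsilon,t} - \mat{x}_t$ and then track separately how its components inside and orthogonal to the manifold evolve. First I would note that on clean data the running loss vanishes (the state lies in $Z_\parallel^t$, so $\mathcal{E}_t(\mat{x}_t)=\mat{x}_t$) and the regularizer is minimized at zero control; hence the clean trajectory obeys $\mat{x}_{t+1}=\boldsymbol\theta_t\mat{x}_t$ with $\mat{u}_t=\mat{0}$. Subtracting this from the controlled perturbed dynamics $\overline{\mat{x}}_{\epsilon,t+1}=\boldsymbol\theta_t(\overline{\mat{x}}_{\epsilon,t}+\mat{u}_t)$ gives $\mat{e}_{t+1}=\boldsymbol\theta_t(\mat{e}_t+\mat{u}_t)$. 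Writing $\mat{P}_t\vcentcolon=\mat{V}_t^r(\mat{V}_t^r)^T$ for the projection onto $Z_\parallel^t$ and $\mat{I}-\mat{P}_t$ for its complement, and taking $\mat{R}=c\,\mat{I}$, I would solve the per-layer minimization of $\lVert(\mat{I}-\mat{P}_{t+1})\mat{x}_{t+1}\rVert_2^2+c\lVert\mat{u}_t\rVert_2^2$ to obtain the closed form $\mat{u}_t=-(1-\alpha)(\mat{I}-\mat{P}_t)\mat{e}_t$ with $\alpha=c/(1+c)$, using that $(\mat{I}-\mat{P}_t)\mat{x}_t=\mat{0}$ so that $(\mat{I}-\mat{P}_t)\overline{\mat{x}}_{\epsilon,t}=(\mat{I}-\mat{P}_t)\mat{e}_t$ is computable from the observed state. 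This yields the key identity $\mat{e}_{t+1}=\boldsymbol\theta_t\mat{A}_t\mat{e}_t$ with $\mat{A}_t\vcentcolon=\mat{I}-(1-\alpha)(\mat{I}-\mat{P}_t)$, whose eigenvalues are $1$ on $Z_\parallel^t$ and $\alpha$ on $Z_\perp^t$; unrolling gives $\mat{e}_t=\boldsymbol\theta_{t-1}\mat{A}_{t-1}\cdots\boldsymbol\theta_0\mat{A}_0\,\mat{z}$.

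Next I would dispose of the orthogonal (equality) case. When every $\boldsymbol\theta_s$ is orthogonal it automatically maps $Z_\parallel^s\to Z_\parallel^{s+1}$ and $Z_\perp^s\to Z_\perp^{s+1}$ (since $Z_\parallel^{s+1}=\boldsymbol\theta_s Z_\parallel^s$ by construction of the PCA subspaces), so the projections commute with the weights, $\boldsymbol\theta_s\mat{P}_s=\mat{P}_{s+1}\boldsymbol\theta_s$. Then the parallel and perpendicular parts of $\mat{z}$ propagate independently, norms are preserved, and each passage through an $\mat{A}_s$ multiplies the perpendicular part by $\alpha$ while fixing the parallel part. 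Because the two parts stay mutually orthogonal, the Pythagorean theorem gives $\lVert\mat{e}_t\rVert_2^2=\alpha^{2t}\lVert\mat{z}^\perp\rVert_2^2+\lVert\mat{z}^\parallel\rVert_2^2$, which is exactly (\ref{eq:theorem1_orth}) and also matches (\ref{eq:theorem1}) once $\gamma_t=0$ and $\lVert\boldsymbol\theta_{t-1}\cdots\boldsymbol\theta_0\rVert_2=1$.

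For the general bound I would factor the overall amplification $\lVert\boldsymbol\theta_{t-1}\cdots\boldsymbol\theta_0\rVert_2$ out front and treat the failure of each $\boldsymbol\theta_s$ to be orthogonal as a perturbation of this clean commuting picture. Writing $\boldsymbol\theta_s=\sigma_s\overline{\boldsymbol\theta}_s$ with $\overline{\boldsymbol\theta}_s$ the normalized weight, the obstruction to $\boldsymbol\theta_s$ preserving the orthogonal splitting (equivalently to $\mat{P}_{s+1}\boldsymbol\theta_s=\boldsymbol\theta_s\mat{P}_s$) is controlled by $\lVert\mat{I}-\overline{\boldsymbol\theta}_s^T\overline{\boldsymbol\theta}_s\rVert_2$, while the condition number $\kappa(\overline{\boldsymbol\theta}_s)$ enters when transferring estimates between the domain and range splittings. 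The idea is to expand $\lVert\mat{e}_t\rVert_2^2$ against the clean parallel/perpendicular propagation, collect the diagonal terms into $\alpha^{2t}\lVert\mat{z}^\perp\rVert_2^2+\lVert\mat{z}^\parallel\rVert_2^2$, and bound the residual cross- and leakage-terms: each such term originates at a single non-orthogonal step $s$ and is thereafter contracted by the $\alpha$-factors of the later $\mat{A}$'s, so that summing the resulting geometric series in $\alpha$ yields the coefficients $\alpha^2(1-\alpha^{t-1})^2$ and $2(\alpha-\alpha^t)$, with all per-step non-orthogonality absorbed into the uniform constant $\gamma_t=\max_{s\le t}(1+\kappa(\overline{\boldsymbol\theta}_s)^2)\lVert\mat{I}-\overline{\boldsymbol\theta}_s^T\overline{\boldsymbol\theta}_s\rVert_2$.

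The hard part will be this last step: once the $\boldsymbol\theta_s$ mix $Z_\parallel$ and $Z_\perp$, the factorization $\mat{e}_t=\boldsymbol\theta_{t-1}\mat{A}_{t-1}\cdots\boldsymbol\theta_0\mat{A}_0\mat{z}$ no longer splits orthogonally and the two channels acquire cross terms at every layer. Keeping these terms in a form that (i) factors cleanly through $\lVert\boldsymbol\theta_{t-1}\cdots\boldsymbol\theta_0\rVert_2^2$ rather than the looser $\prod_s\lVert\boldsymbol\theta_s\rVert_2^2$, and (ii) telescopes into exactly the quadratic-in-$\gamma_t$ expression with the stated geometric coefficients, is the delicate bookkeeping. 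I expect to need an inductive estimate on $\lVert\mat{e}_t\rVert_2$ (or on the separate norms of its two projected components) whose inductive hypothesis simultaneously carries the $\alpha^t$ decay of the perpendicular part and a $\gamma_t$-weighted bound on the parallel mass that has leaked in from earlier non-orthogonal steps.
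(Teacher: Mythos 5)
Your setup is the same as the paper's: you recover the error recursion $\mat{e}_{t+1}=\boldsymbol\theta_t(\mat{e}_t+\mat{u}_t)$, the greedy per-layer control whose closed form gives $\mat{e}_{t+1}=\boldsymbol\theta_t\bigl(\alpha\mat{I}+(1-\alpha)\mat{P}_t\bigr)\mat{e}_t$ (the paper's Lemma on $\mat{I}-\mat{K}_t$), and the orthogonal special case is handled correctly. But the general bound --- which is the substance of the theorem --- is left as a plan, and the plan is missing the one device that makes the paper's proof go through. The paper does \emph{not} bound cross terms layer by layer and sum a perturbation series. Instead it commutes every weight matrix to the left \emph{exactly}: defining the oblique projections $\mat{P}_t^{s+1}:=\boldsymbol\theta_{t-s-1}^{-1}\mat{P}_t^{s}\boldsymbol\theta_{t-s-1}$ and $\mat{G}_t^{s}:=\alpha\mat{I}+(1-\alpha)\mat{P}_t^{s}$, one has the identity $\boldsymbol\theta_{t-s-1}\mat{G}_t^{s+1}=\mat{G}_t^{s}\boldsymbol\theta_{t-s-1}$, whence $\mat{e}_t=(\boldsymbol\theta_{t-1}\cdots\boldsymbol\theta_0)(\mat{G}_{t-1}^{t-1}\cdots\mat{G}_0^{0})\mat{z}$ with no inequality yet. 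This is precisely what achieves your desideratum (i), the clean factor $\lVert\boldsymbol\theta_{t-1}\cdots\boldsymbol\theta_0\rVert_2^2$; it also requires invertibility of the $\boldsymbol\theta_s$, which your sketch never invokes. The second missing ingredient is that all the pulled-back projections $\mat{P}_s^{s}$ share the common range $Z_{\parallel}^{0}$, so $\mat{P}_t^{t}\mat{P}_s^{s}=\mat{P}_s^{s}$ and the product of the $\mat{G}$'s collapses to $\alpha^{t}\mat{I}+(1-\alpha)\sum_{s=0}^{t-1}\alpha^{s}\mat{P}_s^{s}$; only \emph{after} this collapse does one write $\mat{P}_s^{s}=\mat{P}_0+(\mat{P}_s^{s}-\mat{P}_0)$, expand the square, and bound the remainder terms by $\lVert\mat{P}_s^{s}-\mat{P}_0\rVert_2\le\gamma_t$. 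Without the collapse, your proposed expansion has $O(t^2)$ cross terms whose coefficients do not obviously telescope into $\gamma_t\alpha^2(1-\alpha^{t-1})^2+2(\alpha-\alpha^{t})$.

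There is also a concrete misreading that would derail the last step: you take $\overline{\boldsymbol\theta}_s$ to be a per-layer normalized weight ($\boldsymbol\theta_s=\sigma_s\overline{\boldsymbol\theta}_s$). In the theorem $\overline{\boldsymbol\theta}_s:=\boldsymbol\theta_{s-1}\cdots\boldsymbol\theta_0$ is the \emph{accumulated} product, because the quantity actually being bounded is $\lVert\mat{P}_s^{s}-\mat{P}_0\rVert_2=\lVert\overline{\boldsymbol\theta}_s^{-1}\mat{P}_s\overline{\boldsymbol\theta}_s-\mat{P}_0\rVert_2\le\bigl(1+\kappa(\overline{\boldsymbol\theta}_s)^2\bigr)\lVert\mat{I}-\overline{\boldsymbol\theta}_s^{T}\overline{\boldsymbol\theta}_s\rVert_2$, i.e.\ the distance of an oblique projection (conjugated by the whole prefix of the network) from the orthogonal projection onto the input manifold. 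A per-layer non-orthogonality estimate composed over $t$ layers would accumulate multiplicatively and would not reproduce the stated $\gamma_t$. So: right recursion, right control law, right special case, but the general inequality needs the oblique-projection conjugation and range-collapse lemmas, which your proposal does not supply.
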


The detailed derivation is presented in Appendix \ref{appendix_A}. Let us summarize the insights from Theorem \ref{Theorem1}.
\begin{itemize}[leftmargin=*]
    \item The above error estimation is general for any input perturbation. It shows the working principle behind the proposed CLC-NN on controlling the perturbation that lies in the orthogonal complement of input subspace ($\mat{z}^{\perp}$).
    \item The above error estimation improves as the control regularization $c$ goes to $0$ (so $\alpha\rightarrow 0$). It is not the sharpest possible as it relies on a greedily optimal control at each layer. The globally optimal control defined by the Ricatti equation may achieve a lower loss when $c\neq 0$.
    \item When the dimension of embedding subspace $r$ decreases, our control becomes more effective in reducing $\lVert \overline{\mat{x}}_{\epsilon, t} - \mat{x}_t \rVert_2^2$. This means that the control approach works the best when the data is constrained on a low dimensional manifold, which is consistent with the manifold hypothesis. In particular, observe that as $r\rightarrow 0$, $\lVert \mat{z}^{\parallel}\rVert_2^2 \rightarrow 0$
    \item The obtained upper bound is tight: the estimated upper bound becomes the actual error if all the forward propagation layers are orthogonal matrices.
\end{itemize}


\begin{table}[t]
\vspace{-15pt}
\caption{Experimental results on ResNet-20 from standard training.}
\vspace{-10pt}
\begin{center}
\small
\begin{tabular}{ P{1.0cm}|P{0.5cm}|P{1.9cm}P{1.9cm}P{1.9cm}P{1.9cm}P{1.9cm}  }
\thickhline
& & \multicolumn{5}{c}{Accuracy: original model without CLC / CLC-NN + Linear / CLC-NN + Nonlinear } \\
\cline{3-7}
Dataset & $\epsilon$ & \multicolumn{5}{c}{Type of input perturbations}\\
& & None & Manifold &  FGSM  & PGD & CW \\
\hline
\multirow{3}{1.0cm}{CIFAR-10} & 2 & \multirow{3}{1.6cm} {\textcolor{blue}{92}~/~88~/~89} & 24~/~79~/~\textcolor{blue}{82} & 21~/~\textcolor{blue}{56}~/~\textcolor{blue}{56} & 0~/~\textcolor{blue}{50}~/~\textcolor{blue}{50} & 8~/~75~/~\textcolor{blue}{79} \\
& 4 &  & 5~/~78~/~\textcolor{blue}{81} & 11~/~\textcolor{blue}{40}~/~30 & 0~/~\textcolor{blue}{31}~/~19 & 0~/~75~/~\textcolor{blue}{79}  \\
& 8 &  & 1~/~78~/~\textcolor{blue}{81} & 8~/~\textcolor{blue}{20}~/~12 & 0~/~\textcolor{blue}{11}~/~2 & 0~/~76~/~\textcolor{blue}{79} \\
\hline
\multirow{3}{1.0cm}{CIFAR-100}& 2 & \multirow{3}{1.6cm} {\textcolor{blue}{69}~/~60~/~58} & 9~/~51~/~\textcolor{blue}{52} & 9~/~\textcolor{blue}{25}~/~23 & 0~/~17~/~\textcolor{blue}{22} & 4~/~47~/~\textcolor{blue}{49} \\
& 4 &  & 3~/~50~/~\textcolor{blue}{52} & 5~/~\textcolor{blue}{15}~/~9 & 0~/~\textcolor{blue}{6}~/~4 & 1~/~47~/~\textcolor{blue}{49} \\
& 8 &  & 2~/~50~/~\textcolor{blue}{52} & 4~/~\textcolor{blue}{9}~/~5 & 0~/~\textcolor{blue}{1}~/~0 & 0~/~47~/~\textcolor{blue}{49} \\
\hline
\end{tabular}\normalsize
\end{center}
\label{cifar_baseline}
\vspace{-10pt}
\end{table}

\section{Numerical Experiments}
We test our proposed CLC-NN framework under various input data perturbations. Here we briefly summarize our experimental settings, and we refer readers to Appendix \ref{appendixC} for the details.
\begin{itemize}[leftmargin=*]
    \item {\bf Original Networks without Close-Loop Control.} We choose residual neural networks \citep{he2016deep} with ReLU activation functions as our target for close-loop control. In order to show that CLC-NN can improve the robustness in various settings, we consider networks from both standard and adversarial trainings. 
We consider multiple adversarial training methods: fast gradient sign method ({\bf FGSM}) \citep{goodfellow2014explaining}, project gradient descent ({\bf PGD}) \citep{madry2017towards}, and the Label smoothing training ({\bf Label Smooth}) \citep{hazan2017adversarial}. 

\item {\bf Input Perturbations.} In order to test our CLC-NN framework, we perturb the input data within a radius of $\epsilon$ with $\epsilon=2,4$ and $8$ respectively. We consider various perturbations, including non-adversarial perturbations with the manifold-based attack \citep{jalal2017robust} ({\bf Manifold}), as well as some adversarial attacks such as {\bf FGSM}, {\bf PGD} and the {\bf CW} methods \citep{carlini2017towards}.

\item {\bf CLC-NN Implementations.} We consider both linear and nonlinear embedding in our close-loop control. Specifically, we employ a principal component analysis with a $1\%$ truncation error for linear embedding, and convolutional auto-encoders for nonlinear embedding. We use Adam \citep{kingma2014adam} to maximize the Hamiltonian function~(\ref{eq:pmp_maximization}) and
{\it keep the same hyper-parameters (learning rate, maximum iterations) for each model against all perturbations.}
\end{itemize}



\vspace{-10pt}
\paragraph{Result Summary:} Table~\ref{cifar_baseline} and Table~\ref{cifar_robust} show the results for both CIFAR-$10$ and CIFAR-$100$ datasets on some neural networks from both standard training and adversarial training respectively.
\begin{itemize}[leftmargin=*]
    \item {\bf CLC-NN significantly improves the robustness of neural networks from standard training.} Table~\ref{cifar_baseline} shows that the baseline network trained on a clean data set becomes completely vulnerable (with almost $0 \%$ accuracy) under PGD and CW attacks. Our CLC-NN improves its accuracy to nearly $40 \%$ and $80 \%$ under PGD and CW attacks respectively. The accuracy under FGSM attacks has almost been doubled by our CLC-NN method.  The accuracy on clean data is slightly decreased because the lower-dimensional embedding functions cannot exactly capture $Z_{\parallel}$ or $\mathcal{M}$.
    \item {\bf CLC-NN further improves the robustness of adversarially trained networks.} Table~\ref{cifar_robust} shows that while an adversarially trained network is inherently robust against certain types of perturbations, CLC-NN strengthens its robustness significantly against various perturbations. For instance, CLC-NN improves the accuracy of an FGSM trained network under PGD and CW attacks by a maximum of $59 \%$ and $81 \%$, respectively. 
    \item {\bf The robustness improvement of adversarially trained networks is less significant.} This is expected because the trajectory of perturbed data lies on the embedding subspace $Z_{\parallel}$ if that data sample has been used in adversarial training. However, our experiments show that applying CLC-NN to adversarially trained networks can achieve the best performance under most attacks.
    \end{itemize}
\vspace{-10pt}
\paragraph{Comparison with PixelDefend~\citep{song2017pixeldefend}.} Our method achieves similar performance  on CIFAR-10 with slightly different experimental setting. Specifically, PixelDefend improved the robustness of a normally trained $62$-layer ResNet from $0 \%$ to $78 \%$ against CW attack. Our proposed CLC-NN improves the robustness of a $20$-layer ResNet from $0 \%$ to $81 \%$ against CW attacks. Furthermore, we show that CLC-NN is robust against the manifold-based attack. No result was reported for CIFAR-100 in~\cite{song2017pixeldefend}.

\begin{table}[t]
\vspace{-15pt}
\caption{Experimental results on robustly trained networks.}
\begin{center}
\small
\begin{tabular}{ P{1.5cm}|P{1.2cm}|P{0.5cm}|P{1.2cm}P{1.2cm}P{1.2cm}P{1.2cm}P{1.2cm}  }
\thickhline
& & & \multicolumn{5}{c}{Accuracy: Robustly trained models / CLC-NN + Linear Embedding } \\
\cline{4-8}
Dataset & Training & $\epsilon$ & \multicolumn{5}{c}{Type of input perturbations} \\
& methods & & None & Manifold &  FGSM  & PGD & CW \\
\hline
\multirow{9}{1.5cm}{CIFAR-10 } & \multirow{3}{1.2cm}{~FGSM} & 2 & \multirow{3}{1.2cm}{\textcolor{blue}{92}~/~90} & 43~/~\textcolor{blue}{82} & \textcolor{blue}{90}~/~87 & 1~/~\textcolor{blue}{60} & 14~/~\textcolor{blue}{81} \\
 & & 4 &  & 17~/~\textcolor{blue}{81} & \textcolor{blue}{95}~/~90 & 0~/~\textcolor{blue}{36} & 2~/~\textcolor{blue}{81} \\
 & & 8 &  & 3~/~\textcolor{blue}{80} & \textcolor{blue}{96}~/~93 & 0~/~\textcolor{blue}{10} & 0~/~\textcolor{blue}{81} \\
\cline{2-8} & \multirow{3}{1.2cm}{~Label Smooth} & 2 & \multirow{3}{1.2cm}{\textcolor{blue}{93}~/~89} & 42~/~\textcolor{blue}{78} & 57~/~\textcolor{blue}{69} & 10~/~\textcolor{blue}{60} & 29~/~\textcolor{blue}{76} \\
 & & 4 &  & 20~/~\textcolor{blue}{77} & 51~/~\textcolor{blue}{61} & 1~/~\textcolor{blue}{49} & 11~/~\textcolor{blue}{76} \\
 & & 8 &  & 6~/~\textcolor{blue}{77} & 40~/~\textcolor{blue}{47} & 0~/~\textcolor{blue}{26} & 2~/~\textcolor{blue}{76} \\
 \cline{2-8} & \multirow{3}{1.2cm}{~PGD} & 2 & \multirow{3}{1.2cm}{\textcolor{blue}{83}~/~80} & \textcolor{blue}{80}~/~78 & \textcolor{blue}{75}~/~73 & \textcolor{blue}{74}~/~73 & \textcolor{blue}{75}~/~\textcolor{blue}{75} \\
 & & 4 &  & \textcolor{blue}{78}~/~76 & \textcolor{blue}{66}~/~\textcolor{blue}{66} & 63~/~\textcolor{blue}{65} & 66~/~\textcolor{blue}{71} \\
 & & 8 &  & \textcolor{blue}{74}~/~73 & 49~/~\textcolor{blue}{51} & 40~/~\textcolor{blue}{46} & 48~/~\textcolor{blue}{64} \\
\hline

\multirow{9}{1.5cm}{CIFAR-100} & \multirow{3}{1.2cm}{~FGSM} & 2 & \multirow{3}{1.2cm}{\textcolor{blue}{69}~/~66} & 21~/~\textcolor{blue}{55} & \textcolor{blue}{59}~/~54 & 1~/~\textcolor{blue}{27} & 9~/~\textcolor{blue}{57} \\
 & & 4 &  & 8~/~\textcolor{blue}{54} & \textcolor{blue}{66}~/~55 & 0~/~\textcolor{blue}{9} & 2~/~\textcolor{blue}{57} \\
 & & 8 &  & 3~/~\textcolor{blue}{53} & \textcolor{blue}{67}~/~56 & 0~/~\textcolor{blue}{1} & 0~/~\textcolor{blue}{57} \\
 \cline{2-8} & \multirow{3}{1.2cm}{~Label Smooth} & 2 & \multirow{3}{1.2cm}{\textcolor{blue}{69}~/~62} & 10~/~\textcolor{blue}{43} & 16~/~\textcolor{blue}{28} & 1~/~\textcolor{blue}{19} & 5~/~\textcolor{blue}{46} \\
 & & 4 &  & 3~/~\textcolor{blue}{42} & 12~/~\textcolor{blue}{19} & 0~/~\textcolor{blue}{9} & 1~/~\textcolor{blue}{46} \\
 & & 8 &  & 2~/~\textcolor{blue}{42} & 8~/~\textcolor{blue}{11} & 0~/~\textcolor{blue}{2} & 0~/~\textcolor{blue}{46} \\
\cline{2-8} & \multirow{3}{1.2cm}{~PGD} & 2 & \multirow{3}{1.2cm}{\textcolor{blue}{58}~/~52} & \textcolor{blue}{52}~/~50 & \textcolor{blue}{46}~/~43 & \textcolor{blue}{45}~/~43 & 45~/~\textcolor{blue}{47} \\
 & & 4 &  & \textcolor{blue}{49}~/~48 & \textcolor{blue}{36}~/~\textcolor{blue}{36} & 34~/~\textcolor{blue}{35} & 35~/~\textcolor{blue}{43} \\
 & & 8 &  & 43~/~\textcolor{blue}{45} & 22~/~\textcolor{blue}{24} & 16~/~\textcolor{blue}{21} & 21~/~\textcolor{blue}{40} \\
\hline
\end{tabular}\normalsize
\end{center}
\label{cifar_robust}
\vspace{-10pt}
\end{table}

\vspace{-5pt}
\paragraph{Comparison with Reactive Defense}
Reactive defenses can be understood as only applying a control at the initial condition of a dynamical system.  Specifically, reactive defense equipped with linear embedding admits the following dynamics:
\begin{equation}
    \overline{\mat{x}}_{t+1} = f(\overline{\mat{x}}_t, \boldsymbol\theta_t), \hspace{0.3cm} s.t.~ \overline{\mat{x}}_0 = \mat{V}_0^r (\mat{V}_0^r)^T \mat{x}_{\epsilon, 0}.
    \label{eq:reactive_defense}
\end{equation}
By contrast, CLC-NN controls all hidden states and results in a decreasing error as the number of layers $T$ increases (c.f. Theorem \ref{Theorem1}). To quantitatively compare CLC-NN with reactive defense, we implement them with the same linear embedding functions and against all perturbations. In Table~\ref{reactive_comparision}, CLC-NN outperforms reactive defense in almost all cases except that their performances are case-dependent on clean data. 
\vspace{-10pt}
\begin{table}[t]
\caption{Accuracy comparision of CLC-NN and reactive defense in Eq.~(\ref{eq:reactive_defense}), with $\epsilon_{attack}$ = $2$~/~$4$~/~$8$. Here ``+" (or ``-") indicates how much CLC-NN outperforms (or underperforms) reactive defense.}
\begin{center}
\small
\begin{tabular}{ P{1.5cm}|P{0.8cm}P{2cm}P{2cm}P{2cm}P{2cm}  }
\thickhline
\multirow{2}{*}{Method}& \multicolumn{5}{c}{Type of input  perturbations} \\
 & None & Manifold &  FGSM  & PGD & CW \\
\hline
CIFAR-10 & -3 & \textcolor{blue}{+47}~/~\textcolor{blue}{+63}~/~\textcolor{blue}{+66} & \textcolor{blue}{+27}~/~\textcolor{blue}{+20}~/~\textcolor{blue}{+13} & \textcolor{blue}{+43}~/~\textcolor{blue}{+35}~/~\textcolor{blue}{+25} & \textcolor{blue}{+66}~/~\textcolor{blue}{+76}~/~\textcolor{blue}{+77} \\
\hline
CIFAR-100 & \textcolor{blue}{+1} & \textcolor{blue}{+34}~/~\textcolor{blue}{+37}~/~\textcolor{blue}{+38} & \textcolor{blue}{+22}~/~0~/~\textcolor{blue}{+9} & \textcolor{blue}{+44}~/~\textcolor{blue}{+30}~/~\textcolor{blue}{+11} & \textcolor{blue}{+37}~/~\textcolor{blue}{+30}~/~\textcolor{blue}{+16}  \\
\hline
\end{tabular}\normalsize
\end{center}
\label{reactive_comparision}
\vspace{-10pt}
\end{table}

\section{Conclusion}
We have proposed a close-loop control formulation to improve the robustness of neural networks. We have studied the embedding of state trajectory during forward propagation to define the optimal control objective function. The numerical experiments have shown that our method can improve the robustness of a trained neural network against various perturbations. We have provided an error estimation for the proposed method in the linear case. Our current implementation uses the Pontryagin's Maximum Principle and an online iterative algorithm to overcome the intractability of solving a dynamical programming. This online process adds extra inference time. In the future, we plan to show the theoretical analysis for the nonlinear embedding case. 

 \paragraph{Acknowledgement} Zhuotong Chen and Zheng Zhang are supported by NSF CAREER Award No. 1846476 and NSF CCF No. 	1817037. Qianxiao Li is supported by the start-up grant under the NUS PYP programme.


\bibliography{iclr2021_conference}
\bibliographystyle{iclr2021_conference}

\appendix

\newpage
\section{Appendix A ~ Error Estimation for the Proposed CLC-NN}
\label{appendix_A}
\paragraph{Preliminaries}
We define the performance index at time $t$ as
\begin{equation}
    J(\mat{x}_t, \mat{u}_t) = \frac{1} {2} \lVert \mat{Q}_t(\mat{x}_t + \mat{u}_t) \rVert_2^2 + \frac{c} {2} \lVert \mat{u}_t \rVert_2^2,
    \label{eq:prop_cost_function}
\end{equation}
where $\mat{Q}_t = \mat{I} - \mat{V}_t^r (\mat{V}_t^r)^T$, $\mat{V}_t^r$ is the linear projection matrix at time $t$ with only its first $r$ principle components corresponding to the largest $r$ eigenvalues. The optimal feedback control is defined as
\begin{equation*}
    \mat{u}_t^{\ast} (\mat{x}_t) = \arg \min \limits_{\mat{u}_t} J(\mat{x}_t, \mat{u}_t),
\end{equation*}
due to the linear system and quadratic performance index, the optimal feedback control admits an analytic solution by taking the gradient of performance index (Eq.~(\ref{eq:prop_cost_function})) and setting it to $\mat{0}$.
\begin{align*}
    \nabla_{\mat{u}} J(\mat{x}_t, \mat{u}_t) & = \nabla_{\mat{u}} \bigg( \frac{1} {2} \lVert \mat{Q}_t (\mat{x}_t + \mat{u}_t) \rVert_2^2 + \frac{c} {2} \lVert \mat{u}_t \rVert_2^2 \bigg), \nonumber \\
    & = \mat{Q}_t^T \mat{Q}_t \mat{x}_t + \mat{Q}_t^T \mat{Q}_t \mat{u}_t + c \cdot \mat{u}_t,
\end{align*}
which leads to the analytic solution of $\mat{u}_t^{\ast}(\mat{x}_t)$ as
\begin{equation}
    \mat{u}_t^{\ast} (\mat{x}_t) = - (c \cdot \mat{I} + \mat{Q}_t^T \mat{Q}_t)^{-1} \mat{Q}_t^T \mat{Q}_t \mat{x}_t.
    \label{eq:analytic_solution_u}
\end{equation}
The above analytic control solution $\mat{u}_t^{\ast}$ optimizes the performance index instantly at time step $t$, the error measured by Eq.~(\ref{eq:prop_cost_function}) for the dynamical programming solution $\overline{\mat{x}}_{\epsilon, t}$ must be smaller or equal to the state trajectory equipped with $\mat{u}_t^{\ast}$ define by Eq.~(\ref{eq:analytic_solution_u}), which gives a guaranteed upper bound for the error estimation of the dynamic programming solution.

We define the feedback gain matrix $\mat{K}_t = (c \cdot \mat{I} + \mat{Q}_t^T \mat{Q}_t)^{-1} \mat{Q}_t^T \mat{Q}_t$. Thus, the one-step optimal feedback control can be represented as $\mat{u}_t^{\ast} = - \mat{K}_t \mat{x}_t$.

The difference between the controlled system applied with perturbation at initial condition and the uncontrolled system without perturbation is shown
\begin{align}
    \overline{\mat{x}}_{\epsilon, t+1} - \mat{x}_{t+1} & = \boldsymbol\theta_t (\overline{\mat{x}}_{\epsilon, t} + \mat{u}_t - \mat{x}_t), \nonumber \\
    & = \boldsymbol\theta_t (\overline{\mat{x}}_{\epsilon, t} - \mat{K}_t \overline{\mat{x}}_{\epsilon, t} - \mat{x}_t).
    \label{eq:state_difference}
\end{align}
The control objective is to minimize the state components that span the orthogonal complement of the data manifold ($\mat{I} - \mat{V}_t^r (\mat{V}_t^r)^T$), when the input data to feedback control only stays in the state manifold, such that $\lVert (\mat{I} - \mat{V}_t^r (\mat{V}_t^r)^T) \mat{x}_t \rVert_2^2 = 0$, the feedback control $\mat{K}_t \mat{x}_t = \mat{0}$. The state difference of Eq.~(\ref{eq:state_difference}) can be further shown by adding a $\mat{0}$ term of $(\boldsymbol\theta_t \mat{K}_t \mat{x}_t)$
\begin{align}
\label{eq:controlled_dynamics}
    \overline{\mat{x}}_{\epsilon, t+1} - \mat{x}_{t+1} & = \boldsymbol\theta_t (\mat{I} - \mat{K}_t) \overline{\mat{x}}_{\epsilon, t} - \boldsymbol\theta_t \mat{x}_t + \boldsymbol\theta_t \mat{K}_t \mat{x}_t, \nonumber \\
    & = \boldsymbol\theta_t (\mat{I} - \mat{K}_t) (\overline{\mat{x}}_{\epsilon, t} - \mat{x}_t).
\end{align}
In the following, we show a transformation on the control dynamic term $(\mat{I} - \mat{K}_t)$ based on its definition.
\begin{Lemma}
\label{lemma:control_matrix}
For $t \geq 0$, we have
\begin{equation*}
    \mat{I} - \mat{K}_t = \alpha \cdot \mat{I} + (1 - \alpha) \cdot \mat{P}_t,
\end{equation*}
where $\mat{P}_t \vcentcolon = \mat{V}_t^r (\mat{V}_t^r)^T$, which is the orthogonal projection onto $Z_{\parallel}^{t}$, and $\alpha \vcentcolon = \frac{c} {1 + c}$ such that $\alpha \in [0, 1]$.
\end{Lemma}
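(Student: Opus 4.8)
The plan is to exploit the projection structure hidden inside the definition of $\mat{K}_t$. First I would observe that $\mat{Q}_t = \mat{I} - \mat{V}_t^r (\mat{V}_t^r)^T = \mat{I} - \mat{P}_t$ is an \emph{orthogonal} projection onto $Z_{\perp}^t$, since $\mat{V}_t^r$ has orthonormal columns. Consequently $\mat{Q}_t$ is symmetric and idempotent, so that $\mat{Q}_t^T \mat{Q}_t = \mat{Q}_t$, which immediately simplifies the feedback gain from $\mat{K}_t = (c\,\mat{I} + \mat{Q}_t^T \mat{Q}_t)^{-1} \mat{Q}_t^T \mat{Q}_t$ to $\mat{K}_t = (c\,\mat{I} + \mat{Q}_t)^{-1}\mat{Q}_t$.

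Next I would diagonalize the matrix being inverted along the complementary splitting $Z^t = Z_{\parallel}^t \oplus Z_{\perp}^t$. Writing $\mat{I} = \mat{P}_t + \mat{Q}_t$ gives $c\,\mat{I} + \mat{Q}_t = c\,\mat{P}_t + (1+c)\,\mat{Q}_t$. Because $\mat{P}_t$ and $\mat{Q}_t$ are orthogonal projections onto complementary subspaces (so $\mat{P}_t \mat{Q}_t = \mat{Q}_t \mat{P}_t = \mat{0}$, $\mat{P}_t^2 = \mat{P}_t$, $\mat{Q}_t^2 = \mat{Q}_t$), this expression is block-diagonal in that splitting, and for $c>0$ its inverse is simply $\tfrac{1}{c}\mat{P}_t + \tfrac{1}{1+c}\mat{Q}_t$ (verified by multiplying out and using orthogonality of the projections). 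Multiplying on the right by $\mat{Q}_t$ and again using $\mat{P}_t \mat{Q}_t = \mat{0}$ and $\mat{Q}_t^2 = \mat{Q}_t$ collapses everything to $\mat{K}_t = \tfrac{1}{1+c}\mat{Q}_t$.

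Finally I would substitute back and re-express in terms of $\mat{P}_t$. Using $\mat{Q}_t = \mat{I} - \mat{P}_t$ together with $\alpha = \tfrac{c}{1+c}$, equivalently $1-\alpha = \tfrac{1}{1+c}$, we obtain
\[
\mat{I} - \mat{K}_t = \mat{I} - (1-\alpha)\,\mat{Q}_t = \mat{I} - (1-\alpha)(\mat{I} - \mat{P}_t) = \alpha\,\mat{I} + (1-\alpha)\,\mat{P}_t,
\]
which is exactly the claim, and the range $\alpha \in [0,1]$ follows since $c \geq 0$.

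There is no genuine obstacle here; the statement is essentially a direct computation. The only step requiring care is the inversion of $c\,\mat{P}_t + (1+c)\,\mat{Q}_t$, where one must justify that the inverse acts as a scalar on each of the two complementary eigenspaces rather than mixing them. If one prefers to avoid writing down the inverse explicitly, an equivalent route is to diagonalize $\mat{Q}_t$ directly: it has eigenvalue $1$ on $Z_{\perp}^t$ and $0$ on $Z_{\parallel}^t$, so $\mat{K}_t = (c\,\mat{I}+\mat{Q}_t)^{-1}\mat{Q}_t$ has eigenvalue $\tfrac{1}{1+c} = 1-\alpha$ on $Z_{\perp}^t$ and $0$ on $Z_{\parallel}^t$; then $\mat{I} - \mat{K}_t$ has eigenvalue $\alpha$ on $Z_{\perp}^t$ and $1$ on $Z_{\parallel}^t$, and reassembling the spectral decomposition as $\alpha\,\mat{Q}_t + \mat{P}_t = \alpha\,\mat{I} + (1-\alpha)\,\mat{P}_t$ gives the same conclusion. (The endpoint $\alpha=0$, i.e.\ $c=0$, is understood as the limiting case, since $\mat{Q}_t$ is singular.)
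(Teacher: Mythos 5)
Your proof is correct and follows essentially the same route as the paper's: both reduce to the observation that $\mat{Q}_t$ is an orthogonal projection, so that $\mat{K}_t$ acts as $0$ on $Z_{\parallel}^t$ and as $\tfrac{1}{1+c}$ on $Z_{\perp}^t$; the paper writes this out with explicit diagonal matrices in the basis $\mat{V}_t$, while you phrase it coordinate-free via the projection identities $\mat{P}_t\mat{Q}_t=\mat{0}$ and $\mat{Q}_t^2=\mat{Q}_t$. Your explicit remark that the inversion requires $c>0$ (with $c=0$ as a limit) is a small point the paper glosses over.
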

\begin{proof}
Recall that $\mat{K}_t = (c \cdot \mat{I} + \mat{Q}_t^T \mat{Q}_t)^{-1} \mat{Q}_t^T \mat{Q}_t$, and $\mat{Q}_t = \mat{I} - \mat{V}_t^r (\mat{V}_t^r)^T$, $\mat{Q}_t$ can be diagonalized as following
\begin{gather*}
 \mat{Q}_t
 = \mat{V}_t
  \begin{bmatrix}
   0 & 0 & \cdots & 0 & 0 \\
   0 & 0 & \cdots & 0 & 0 \\
   \vdots & \vdots & \ddots & 0 & 0 \\
   0 & 0 & \cdots & 1 & 0 \\
   0 & 0 & \cdots & 0 & 1 \\
   \end{bmatrix}
   \mat{V}_t^T,
\end{gather*}
where the first $r$ diagonal elements have common value of $0$ and the last ($d-r$) diagonal elements have common value of $1$. Furthermore, the feedback gain matrix $\mat{K}_t$ can be diagonalized as
\begin{gather*}
 \mat{K}_t
 = \mat{V}_t
  \begin{bmatrix}
   0 & 0 & \cdots & 0 & 0 \\
   0 & 0 & \cdots & 0 & 0 \\
   \vdots & \vdots & \ddots & 0 & 0 \\
   0 & 0 & \cdots & \frac{1} {1+c} & 0 \\
   0 & 0 & \cdots & 0 & \frac{1} {1+c} \\
   \end{bmatrix}
   \mat{V}_t^T,
\end{gather*}
where the last ($d-r$) diagonal elements have common value of $\frac{1} {1+c}$. The control term $(\mat{I} - \mat{K}_t)$ thus can be represented as
\begin{gather*}
 \mat{I} - \mat{K}_t
 = \mat{V}_t
  \begin{bmatrix}
   1 & 0 & \cdots & 0 & 0 \\
   0 & 1 & \cdots & 0 & 0 \\
   \vdots & \vdots & \ddots & 0 & 0 \\
   0 & 0 & \cdots & \frac{c} {1+c} & 0 \\
   0 & 0 & \cdots & 0 & \frac{1} {1+c} \\
   \end{bmatrix}
   \mat{V}_t^T,
\end{gather*}
where the first $r$ diagonal elements have common value of $1$ and the last ($d-r$) diagonal elements have common value of $\frac{c} {1+c}$. By denoting the projection of first $r$ columns as $\mat{V}_t^r$ and last $(d-r)$ columns as $\hat{\mat{V}}_t^r$, it can be further shown as
\begin{align*}
    \mat{I} - \mat{K}_t & = \mat{V}_t^r (\mat{V}_t^r)^T + \frac{c} {1+c} \big( \hat{\mat{V}}_t^r (\hat{\mat{V}}_t^r)^T \big), \\
    & = \mat{P}_t + \alpha \big( \mat{I} - \mat{P}_t \big), \\
    & = \alpha \cdot \mat{I} + (1 - \alpha) \cdot \mat{P}_t.
\end{align*}
\end{proof}

\paragraph{Oblique Projections.}
Let $\mat{P}$ be a linear operator on $\mathbb{R}^d$,
\begin{itemize}
    \item We say that $\mat{P}$ is an projection if $\mat{P}^2 = \mat{P}$.
    \item $\mat{P}$ is an orthogonal projection if $\mat{P} = \mat{P}^T = \mat{P}^2$.
    \item If $\mat{P}^2 = \mat{P}$ but $\mat{P} \neq \mat{P}^T$, it is called an oblique projection.
\end{itemize}

\begin{Proposition}
\label{proposition:projections}
\hspace{0.3cm}
For a projection $\mat{P}$,
\begin{enumerate}[leftmargin=*]
   \item If $\mat{P}$ is an orthogonal projection, then $\lVert \mat{P} \rVert_2 = 1$.
   \item If $\mat{P}$ is an oblique projection, then $\lVert \mat{P} \rVert_2 > 1$.
   \item If $\mat{P}$, $\mat{Q}$ are two projections such that $range(\mat{P}) = range(\mat{Q})$, then $\mat{P} \mat{Q} = \mat{Q}$ and $\mat{Q} \mat{P} = \mat{P}$.
   \item If $\mat{P}$ is a projection, then $rank(\mat{P}) = Tr(\mat{P})$. Furthermore, if $\mat{P}$ is an orthogonal projection, then $rank(\mat{P}) = \lVert \mat{P} \rVert_F^2 = Tr(\mat{P} \mat{P^T})$.
\end{enumerate}
\end{Proposition}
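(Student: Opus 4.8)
The plan is to prove the four claims in an order that isolates the one genuinely delicate estimate, namely (2); claims (1), (3), and (4) all follow from the idempotent and spectral structure of a projection and are routine. For (1), I would decompose any $\mat{x} = \mat{P}\mat{x} + (\mat{I}-\mat{P})\mat{x}$ and use that when $\mat{P}=\mat{P}^T=\mat{P}^2$ these two pieces are orthogonal, so $\lVert \mat{x}\rVert_2^2 = \lVert \mat{P}\mat{x}\rVert_2^2 + \lVert (\mat{I}-\mat{P})\mat{x}\rVert_2^2 \geq \lVert \mat{P}\mat{x}\rVert_2^2$; this gives $\lVert \mat{P}\rVert_2 \leq 1$, and choosing $\mat{x}\in\mathrm{range}(\mat{P})$ (where $\mat{P}\mat{x}=\mat{x}$) makes the inequality an equality, so $\lVert \mat{P}\rVert_2 = 1$. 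For (4), since $\mat{P}^2=\mat{P}$ the minimal polynomial divides $\lambda(\lambda-1)$, so $\mat{P}$ is diagonalizable with eigenvalues in $\{0,1\}$ and the multiplicity of the eigenvalue $1$ equals $\mathrm{rank}(\mat{P})$; as the trace is the sum of eigenvalues, $\Tr(\mat{P})=\mathrm{rank}(\mat{P})$, and in the orthogonal case I would chain $\lVert \mat{P}\rVert_F^2 = \Tr(\mat{P}\mat{P}^T) = \Tr(\mat{P}^2) = \Tr(\mat{P}) = \mathrm{rank}(\mat{P})$ using $\mat{P}=\mat{P}^T$ and $\mat{P}^2=\mat{P}$.

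For (3), the key observation is that any projection acts as the identity on its own range: if $\mat{y}=\mat{P}\mat{z}$ then $\mat{P}\mat{y}=\mat{P}^2\mat{z}=\mat{P}\mat{z}=\mat{y}$. Since $\mathrm{range}(\mat{Q})=\mathrm{range}(\mat{P})$, for every $\mat{x}$ the vector $\mat{Q}\mat{x}$ lies in $\mathrm{range}(\mat{P})$, so $\mat{P}\mat{Q}\mat{x}=\mat{Q}\mat{x}$ for all $\mat{x}$, i.e.\ $\mat{P}\mat{Q}=\mat{Q}$; interchanging the roles of $\mat{P}$ and $\mat{Q}$ gives $\mat{Q}\mat{P}=\mat{P}$.

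The main obstacle is (2), which I would prove via the contrapositive sharpening that a nonzero projection has $\lVert \mat{P}\rVert_2 \geq 1$ (it fixes every vector in its range) and that $\lVert \mat{P}\rVert_2 = 1$ forces $\mat{P}$ to be orthogonal. For the latter implication I would fix $\mat{v}\in\mathrm{range}(\mat{P})$ (so $\mat{P}\mat{v}=\mat{v}$) and $\mat{n}\in\ker(\mat{P})$, and test the norm bound on $\mat{x}=\mat{v}+t\mat{n}$: since $\mat{P}\mat{x}=\mat{v}$, the inequality $\lVert \mat{P}\mat{x}\rVert_2^2 \leq \lVert \mat{x}\rVert_2^2$ reads $\lVert \mat{v}\rVert_2^2 \leq \lVert \mat{v}\rVert_2^2 + 2t\langle \mat{v},\mat{n}\rangle + t^2\lVert \mat{n}\rVert_2^2$ for all real $t$. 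The delicate point is the quantifier handling: the displayed quadratic in $t$ is nonnegative for all $t$ only if its first-order coefficient vanishes, forcing $\langle \mat{v},\mat{n}\rangle = 0$. Hence $\mathrm{range}(\mat{P})\perp\ker(\mat{P})$, which together with the direct-sum decomposition $\mathbb{R}^d=\mathrm{range}(\mat{P})\oplus\ker(\mat{P})$ characterizes $\mat{P}$ as the orthogonal projection onto its range, i.e.\ $\mat{P}=\mat{P}^T$. Contrapositively, an oblique projection cannot attain $\lVert \mat{P}\rVert_2=1$, and combined with $\lVert \mat{P}\rVert_2\geq 1$ this yields $\lVert \mat{P}\rVert_2 > 1$; I would also note the implicit nontriviality ($\mat{P}\neq \mat{0}$), which is automatic for a genuine oblique projection.
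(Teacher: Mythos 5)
The paper states Proposition~\ref{proposition:projections} without any proof, treating all four items as standard linear-algebra facts to be consumed downstream (items (3) and (4) are invoked in Lemma~\ref{lemma:pts} and Lemma~\ref{lemma:Ft}, items (1)--(2) underpin the norm estimates around Lemma~\ref{lemma:condi}), so there is no in-paper argument to compare against; the only question is whether your proofs are sound, and they are. Your item (1) is the standard Pythagorean argument, with equality attained on the range; item (3) correctly reduces to the observation that an idempotent acts as the identity on its range; item (4) is correct, and in fact you can skip the minimal-polynomial step, since for idempotent $\mat{P}$ the eigenspace of $1$ \emph{is} $\mathrm{range}(\mat{P})$ and the eigenspace of $0$ is $\ker(\mat{P})$, whose dimensions already sum to $d$ by the direct-sum decomposition $\mat{x}=\mat{P}\mat{x}+(\mat{I}-\mat{P})\mat{x}$. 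Your treatment of item (2) is the genuinely nontrivial part and is handled correctly: from $\lVert \mat{P}\rVert_2=1$ you test $\mat{x}=\mat{v}+t\mat{n}$ with $\mat{v}\in\mathrm{range}(\mat{P})$, $\mat{n}\in\ker(\mat{P})$, and the requirement that $2t\langle \mat{v},\mat{n}\rangle+t^2\lVert\mat{n}\rVert_2^2\geq 0$ for \emph{all} real $t$ forces $\langle\mat{v},\mat{n}\rangle=0$ (for small $t$ of the appropriate sign the linear term dominates), whence $\mathrm{range}(\mat{P})\perp\ker(\mat{P})$ and $\mat{P}=\mat{P}^T$; combined with $\lVert\mat{P}\rVert_2\geq 1$ for a nonzero projection (automatic for oblique $\mat{P}$, since $\mat{0}$ is symmetric), this gives the strict inequality. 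The only caveat, shared by the paper's statement itself, is that item (1) implicitly assumes $\mat{P}\neq\mat{0}$ (the zero map is an orthogonal projection of norm $0$, and your equality-attainment step needs a nonzero vector in the range); this is harmless in context, since every projection used in Appendix~\ref{appendix_A} is onto a subspace $Z_{\parallel}^{t}$ of rank $r\geq 1$.
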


Define for $t \geq 0$
\[ \begin{cases}
      \mat{P}_t^0 \vcentcolon = \mat{P}_t, \\
      \mat{P}_t^{(s+1)} \vcentcolon = \boldsymbol\theta_{t-s-1}^{-1} \mat{P}_t^s \boldsymbol\theta_{t-s-1} , \hspace{0.3cm} s = 0, 1, \ldots, t-1,
   \end{cases}
\]
\begin{Lemma}
\label{lemma:pts}
Let $\mat{P}_t^s$ be defined as above for $0 \leq s \leq t$. Then
\begin{enumerate}[leftmargin=*]
   \item $\mat{P}_t^s$ is a projection.
   \item $\mat{P}_t^s$ is a projection onto $Z_{\parallel}^{t-s}$, i.e. $range(\mat{P}_t^s) = Z_{\parallel}^{t-s}$.
   \item 
   \begin{equation*}
       \lVert \mat{P}_t^s \rVert_F^2 \leq \kappa (\boldsymbol\theta_{t-1} \boldsymbol\theta_{t-2} \ldots \boldsymbol\theta_{t-s})^2 \cdot r,
   \end{equation*}
   where $\kappa(\mat{A})$ is the condition number of $\mat{A}$, i.e. $\kappa (\mat{A}) = \lVert \mat{A} \rVert_2 \cdot \lVert \mat{A}^{-1} \rVert_2$, and $r = rank(Z_{\parallel}^{0}) = rank(Z_{\parallel}^{1}) = \ldots = rank(Z_{\parallel}^{t})$.
\end{enumerate}
\end{Lemma}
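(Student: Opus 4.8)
The plan is to treat the three claims in order, using induction on $s$ for the first two and an explicit unrolling of the recursion for the third. Throughout I assume the weights $\boldsymbol\theta_\tau$ are invertible (as the analysis in Appendix~\ref{appendix_A} already does when writing $\boldsymbol\theta_{t-s-1}^{-1}$).

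For claim 1, I would show that conjugation by an invertible matrix preserves idempotency, then induct. The base case $\mat{P}_t^0 = \mat{P}_t$ is a projection because it is orthogonal. For the inductive step, assuming $(\mat{P}_t^s)^2 = \mat{P}_t^s$, I compute
\[ (\mat{P}_t^{(s+1)})^2 = \boldsymbol\theta_{t-s-1}^{-1} \mat{P}_t^s \boldsymbol\theta_{t-s-1}\boldsymbol\theta_{t-s-1}^{-1} \mat{P}_t^s \boldsymbol\theta_{t-s-1} = \boldsymbol\theta_{t-s-1}^{-1} (\mat{P}_t^s)^2 \boldsymbol\theta_{t-s-1} = \mat{P}_t^{(s+1)}, \]
where the inner factor $\boldsymbol\theta_{t-s-1}\boldsymbol\theta_{t-s-1}^{-1}$ cancels; only invertibility is used here.

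For claim 2, I would first record the geometric propagation identity $Z_{\parallel}^{\tau+1} = \boldsymbol\theta_\tau Z_{\parallel}^{\tau}$: in the linear network the clean states satisfy $\mat{x}_{\tau+1} = \boldsymbol\theta_\tau \mat{x}_\tau$, so the span of the layer-$(\tau+1)$ states is the image under $\boldsymbol\theta_\tau$ of the span of the layer-$\tau$ states, and since the ranks coincide by hypothesis, invertibility upgrades this inclusion to equality. Then I induct: the base case is $range(\mat{P}_t^0) = Z_{\parallel}^t$ by definition, and given $range(\mat{P}_t^s) = Z_{\parallel}^{t-s}$, invertibility of $\boldsymbol\theta_{t-s-1}$ gives $range(\mat{P}_t^{(s+1)}) = \boldsymbol\theta_{t-s-1}^{-1}\, range(\mat{P}_t^s) = \boldsymbol\theta_{t-s-1}^{-1} Z_{\parallel}^{t-s} = Z_{\parallel}^{t-s-1}$, the last equality being the propagation identity with $\tau = t-s-1$. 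This closes the induction.

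For claim 3, I would unroll the recursion to the closed form $\mat{P}_t^s = \mat{A}^{-1} \mat{P}_t \mat{A}$ with $\mat{A} \vcentcolon = \boldsymbol\theta_{t-1}\boldsymbol\theta_{t-2}\cdots\boldsymbol\theta_{t-s}$; this is a one-line induction since $\boldsymbol\theta_{t-s-1}^{-1}(\mat{A}^{-1}\mat{P}_t\mat{A})\boldsymbol\theta_{t-s-1} = (\mat{A}\boldsymbol\theta_{t-s-1})^{-1}\mat{P}_t(\mat{A}\boldsymbol\theta_{t-s-1})$ and $\mat{A}\boldsymbol\theta_{t-s-1}$ is exactly the product defining the $(s+1)$ case. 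Then applying the mixed sub-multiplicativity of the Frobenius and spectral norms,
\[ \lVert \mat{A}^{-1} \mat{P}_t \mat{A} \rVert_F \leq \lVert \mat{A}^{-1} \rVert_2 \, \lVert \mat{P}_t \rVert_F \, \lVert \mat{A} \rVert_2, \]
and using that $\mat{P}_t$ is an orthogonal projection of rank $r$ so that $\lVert \mat{P}_t \rVert_F^2 = r$ by Proposition~\ref{proposition:projections}(4), squaring and recognizing $\lVert \mat{A} \rVert_2 \lVert \mat{A}^{-1} \rVert_2 = \kappa(\mat{A})$ yields $\lVert \mat{P}_t^s \rVert_F^2 \leq \kappa(\boldsymbol\theta_{t-1}\cdots\boldsymbol\theta_{t-s})^2 \cdot r$.

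The main obstacle is not the algebra but justifying the geometric propagation identity $Z_{\parallel}^{\tau+1} = \boldsymbol\theta_\tau Z_{\parallel}^{\tau}$ used in claim 2: this is where the specific PCA construction of the embedding subspaces and the equal-rank hypothesis genuinely enter, and it is precisely what guarantees that the oblique projection $\mat{P}_t^s$ projects onto the lower-layer manifold $Z_{\parallel}^{t-s}$ rather than onto some unrelated conjugate subspace. Once that identity is secured, claims 1 and 3 are purely formal, and claim 3 depends only on $\mat{P}_t$ being an orthogonal rank-$r$ projection, not on the oblique intermediate projections.
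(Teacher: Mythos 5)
Your proposal is correct and follows essentially the same route as the paper's proof: induction on $s$ using that conjugation by an invertible matrix preserves idempotency (claim 1), induction on the range using $range(\boldsymbol\theta_{t-s-1}^{-1}\mat{P}_t^s\boldsymbol\theta_{t-s-1}) = \boldsymbol\theta_{t-s-1}^{-1}\,range(\mat{P}_t^s)$ (claim 2), and unrolling to $\mat{P}_t^s = \mat{A}^{-1}\mat{P}_t^0\mat{A}$ followed by the mixed Frobenius/spectral submultiplicative bounds and $\lVert \mat{P}_t^0\rVert_F^2 = r$ (claim 3). Your explicit justification of the propagation identity $Z_{\parallel}^{\tau+1} = \boldsymbol\theta_\tau Z_{\parallel}^{\tau}$ is a welcome addition that the paper leaves implicit, but it does not change the argument.
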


\begin{proof}
\hspace{0.3cm}
\begin{enumerate}[leftmargin=*]
   \item We prove it by induction on $s$ for each $t$. For $s=0$, $\mat{P}_t^0 = \mat{P}_t$, which is a projection by its definition. Suppose it is true for $s$ such that $\mat{P}_t^s = \mat{P}_t^s \mat{P}_t^s$, then for $(s+1)$,
   \begin{align*}
       (\mat{P}_t^{s+1})^2 & = \big( \boldsymbol\theta_{t-s-1}^{-1} \mat{P}_t^s \boldsymbol\theta_{t-s-1} \big)^2, \\
       & = \boldsymbol\theta_{t-s-1}^{-1} \big( \mat{P}_t^s \big)^2 \boldsymbol\theta_{t-s-1}, \\
       & = \boldsymbol\theta_{t-s-1}^{-1} \mat{P}_t^s \boldsymbol\theta_{t-s-1}, \\
       & = \mat{P}_t^{s+1}.
   \end{align*}
   \item We prove it by induction on $s$ for each $t$. For $s=0$, $\mat{P}_t^0 = \mat{P}_t$, which is the orthogonal projection onto $Z_{\parallel}^{t}$. Suppose that it is true for $s$ such that $\mat{P}_t^s$ is a projection onto $Z_{\parallel}^{t-s}$, then for $(s+1)$, $\mat{P}_t^{s+1} = \boldsymbol\theta_{t-s-1}^{-1} \mat{P}_t^s \boldsymbol\theta_{t-s-1}$, which implies
   \begin{align*}
       range(\mat{P}_t^{s+1}) & = range(\boldsymbol\theta_{t-s-1}^{-1} \mat{P}_t^s), \\
       & = \{ \boldsymbol\theta_{t-s-1}^{-1} \mat{x}: \mat{x} \in Z_{\parallel}^{t-s} \}, \\
       & = Z_{\parallel}^{t-s-1}.
   \end{align*}
  \item We use the inequalities $\lVert \mat{A} \mat{B} \rVert_F \leq \lVert \mat{A} \rVert_2 \lVert \mat{B} \rVert_F$, and $\lVert \mat{A} \mat{B} \rVert_F \leq \lVert \mat{A} \rVert_F \lVert \mat{B} \rVert_2$. By the definition of $\mat{P}_t^s$,
  \begin{equation*}
      \mat{P}_t^s = \big( \boldsymbol\theta_{t-1} \boldsymbol\theta_{t-2} \cdots \boldsymbol\theta_{t-s} \big)^{-1} \mat{P}_t^0 \big( \boldsymbol\theta_{t-1} \boldsymbol\theta_{t-2} \cdots \boldsymbol\theta_{t-s} \big), 
  \end{equation*}
  we have the following
  \begin{align*}
      \lVert \mat{P}_t^s \rVert_F^2 & \leq \lVert \big( \boldsymbol\theta_{t-1} \boldsymbol\theta_{t-2} \cdots \boldsymbol\theta_{t-s} \big)^{-1} \rVert_2^2 \cdot \lVert \big( \boldsymbol\theta_{t-1} \boldsymbol\theta_{t-2} \cdots \boldsymbol\theta_{t-s} \big) \rVert_2^2 \cdot \lVert \mat{P}_t^0 \rVert_F^2, \\
      & \leq \kappa ( \boldsymbol\theta_{t-1} \boldsymbol\theta_{t-2} \cdots \boldsymbol\theta_{t-s} )^2 \cdot r,  & \textnormal{Lemma}~ \ref{proposition:projections} (4).
  \end{align*}
\end{enumerate}
\end{proof}

The following Lemma uses the concept of oblique projection to show a recursive relationship to project any $t^{th}$ state space of Eq.~(\ref{eq:controlled_dynamics}) back to the input data space.
\begin{Lemma}
\label{lemma:gts}
Define for $0 \leq s \leq t$, 
\begin{equation*}
    \mat{G}_t^s \vcentcolon = \alpha \cdot \mat{I} + (1 - \alpha) \mat{P}_t^s.
\end{equation*}
Then, Eq.~(\ref{eq:controlled_dynamics}) can be written as
\begin{equation*}
    \overline{\mat{x}}_{\epsilon, t} - \mat{x}_{t} = (\boldsymbol\theta_{t-1} \boldsymbol\theta_{t-2} \cdots \boldsymbol\theta_{0}) (\mat{G}_{t-1}^{t-1} \mat{G}_{t-2}^{t-2} \cdots \mat{G}_{0}^{0}) (\overline{\mat{x}}_{\epsilon, 0} - \mat{x}_{0}), \hspace{0.3cm} t \geq 1.
\end{equation*}
\end{Lemma}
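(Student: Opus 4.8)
The plan is to reduce the claim to a single algebraic commutation identity between the propagation matrices $\boldsymbol\theta_t$ and the matrices $\mat{G}_t^s$, and then to unroll the one-step recursion by induction on $t$. Writing $\Delta_t \vcentcolon= \overline{\mat{x}}_{\epsilon,t} - \mat{x}_t$, Lemma~\ref{lemma:control_matrix} identifies $\mat{I} - \mat{K}_t$ with $\alpha\mat{I} + (1-\alpha)\mat{P}_t = \mat{G}_t^0$, since $\mat{P}_t^0 = \mat{P}_t$ by definition. Hence Eq.~(\ref{eq:controlled_dynamics}) becomes the clean recursion $\Delta_{t+1} = \boldsymbol\theta_t\,\mat{G}_t^0\,\Delta_t$. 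Unrolling this directly produces a product in which $\boldsymbol\theta$ and $\mat{G}^0$ factors alternate; the entire content of the lemma is that this product can be reorganized so that all the $\boldsymbol\theta$'s collect on the left and all the $\mat{G}$'s, now carrying ascending superscripts, collect on the right.

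The key step is the intertwining identity
\begin{equation*}
    \boldsymbol\theta_{t-s-1}\,\mat{G}_t^{s+1} = \mat{G}_t^{s}\,\boldsymbol\theta_{t-s-1}, \qquad 0 \le s \le t-1.
\end{equation*}
I would establish it directly from the recursive definition $\mat{P}_t^{s+1} = \boldsymbol\theta_{t-s-1}^{-1}\mat{P}_t^s\,\boldsymbol\theta_{t-s-1}$: multiplying on the left by $\boldsymbol\theta_{t-s-1}$ gives $\boldsymbol\theta_{t-s-1}\,\mat{P}_t^{s+1} = \mat{P}_t^s\,\boldsymbol\theta_{t-s-1}$, and because $\mat{G}_t^s = \alpha\mat{I} + (1-\alpha)\mat{P}_t^s$ is affine in $\mat{P}_t^s$ with a fixed identity term that commutes with everything, the same relation lifts verbatim from $\mat{P}$ to $\mat{G}$. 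This implicitly uses invertibility of the $\boldsymbol\theta_t$, the standing assumption under which $\mat{P}_t^s$ is defined at all. Read in the form $\mat{G}_t^{s}\,\boldsymbol\theta_{t-s-1} = \boldsymbol\theta_{t-s-1}\,\mat{G}_t^{s+1}$, the identity says that pushing a factor $\boldsymbol\theta_{t-s-1}$ leftward past $\mat{G}_t^{s+1}$ (equivalently, pushing $\mat{G}_t^s$ rightward past $\boldsymbol\theta_{t-s-1}$) raises the superscript by exactly one.

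With this in hand I would run an induction on $t$. The base case $t=1$ is simply $\Delta_1 = \boldsymbol\theta_0\,\mat{G}_0^0\,\Delta_0$, which is the recursion itself. For the inductive step, assume $\Delta_t = (\boldsymbol\theta_{t-1}\cdots\boldsymbol\theta_0)(\mat{G}_{t-1}^{t-1}\cdots\mat{G}_0^0)\Delta_0$; applying the recursion once gives $\Delta_{t+1} = \boldsymbol\theta_t\,\mat{G}_t^0\,(\boldsymbol\theta_{t-1}\cdots\boldsymbol\theta_0)(\mat{G}_{t-1}^{t-1}\cdots\mat{G}_0^0)\Delta_0$. I would then transport the freshly introduced $\mat{G}_t^0$ rightward through $\boldsymbol\theta_{t-1},\boldsymbol\theta_{t-2},\dots,\boldsymbol\theta_0$ in succession, invoking the commutation identity at each of the $t$ stages so that the superscript climbs from $0$ to $t$. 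This frees the $\boldsymbol\theta$'s to gather on the left as $\boldsymbol\theta_t\boldsymbol\theta_{t-1}\cdots\boldsymbol\theta_0$ and leaves $\mat{G}_t^t$ sitting immediately to the left of the previously grouped block, yielding exactly $(\boldsymbol\theta_t\cdots\boldsymbol\theta_0)(\mat{G}_t^t\mat{G}_{t-1}^{t-1}\cdots\mat{G}_0^0)\Delta_0$, the claim at $t+1$.

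The only genuine obstacle is index bookkeeping: at each transposition the subscript $t-s-1$ of the $\boldsymbol\theta$ being commuted must be matched against the correct superscript increment $s \mapsto s+1$ on $\mat{G}_t$, so that the superscript telescopes precisely from $0$ up to $t$. Since the definitions of $\mat{P}_t^s$ and $\mat{G}_t^s$ are engineered so that this conjugation telescopes, once the intertwining identity is verified the remainder is a routine rearrangement.
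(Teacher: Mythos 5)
Your proposal is correct and follows essentially the same route as the paper's own proof: an induction on $t$ based on the one-step recursion $\Delta_{t+1}=\boldsymbol\theta_t(\mat{I}-\mat{K}_t)\Delta_t=\boldsymbol\theta_t\mat{G}_t^0\Delta_t$ from Lemma~\ref{lemma:control_matrix}, combined with the intertwining identity $\boldsymbol\theta_{t-s-1}\mat{G}_t^{s+1}=\mat{G}_t^s\boldsymbol\theta_{t-s-1}$ obtained by lifting the conjugation relation on $\mat{P}_t^s$ to the affine combination $\mat{G}_t^s$. The only difference is cosmetic — you state the commutation identity up front and note the reliance on invertibility of the $\boldsymbol\theta_t$ explicitly, whereas the paper derives it inside the inductive step — so there is nothing to correct.
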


\begin{proof}
We prove it by induction on $t$. For $t=1$, by the definition of $\mat{G}_t^s$ and transformation from Lemma \ref{lemma:control_matrix}, 
\begin{align*}
    \overline{\mat{x}}_{\epsilon, 1} - \mat{x}_{1} & = \boldsymbol\theta_0 (\mat{I} - \mat{K}_0) (\overline{\mat{x}}_{\epsilon, 0} - \mat{x}_{0}), & \textnormal{Eq.}~(\ref{eq:controlled_dynamics}), \\
    & = \boldsymbol\theta_0 (\alpha \cdot \mat{I} + (1 - \alpha) \cdot \mat{P}_0) (\overline{\mat{x}}_{\epsilon, 0} - \mat{x}_{0}), & \textnormal{Lemma}~ \ref{lemma:control_matrix}, \\
    & = \boldsymbol\theta_0 \mat{G}_0^0 (\overline{\mat{x}}_{\epsilon, 0} - \mat{x}_{0}).
\end{align*}
Suppose that it is true for $(\overline{\mat{x}}_{\epsilon, t} - \mat{x}_{t})$, by using Eq.~(\ref{eq:controlled_dynamics}) and Lemma \ref{lemma:control_matrix}, we have
\begin{align}
\label{eq:lemma_gts}
    \overline{\mat{x}}_{\epsilon, t+1} - \mat{x}_{t+1} & = \boldsymbol\theta_t (\mat{I} - \mat{K}_t) (\overline{\mat{x}}_{\epsilon, t} - \mat{x}_{t}), & \textnormal{Eq.}~(\ref{eq:controlled_dynamics}), \nonumber \\
    & = \boldsymbol\theta_t (\alpha \cdot \mat{I} - (1 - \alpha) \cdot \mat{P}_t) (\overline{\mat{x}}_{\epsilon, t} - \mat{x}_{t}), & \textnormal{Lemma}~ \ref{lemma:control_matrix}, \nonumber \\
    & = \boldsymbol\theta_t \mat{G}_t^0 (\boldsymbol\theta_{t-1} \boldsymbol\theta_{t-2} \cdots \boldsymbol\theta_{0}) (\mat{G}_{t-1}^{t-1} \mat{G}_{t-2}^{t-2} \cdots \mat{G}_{0}^{0}) (\overline{\mat{x}}_{\epsilon, 0} - \mat{x}_{0}).
\end{align}
Recall the definitions of $\mat{P}_t^{(s+1)} \vcentcolon = \boldsymbol\theta_{t-s-1}^{-1} \mat{P}_t^s \boldsymbol\theta_{t-s-1}$, and $\mat{G}_t^s \vcentcolon = \alpha \cdot \mat{I} + (1 - \alpha) \mat{P}_t^s$, we have the following
\begin{align*}
    \mat{G}_t^{s+1} & = \alpha \cdot \mat{I} + (1 - \alpha) \cdot \mat{P}_t^{(s+1)}, \\
    & = \alpha \cdot \mat{I} + (1 - \alpha) \cdot \boldsymbol\theta_{t-s-1}^{-1} \mat{P}_t^s \boldsymbol\theta_{t-s-1}, \\
    & = \boldsymbol\theta_{t-s-1}^{-1} \big( \alpha \cdot \mat{I} + (1 - \alpha) \cdot \mat{P}_t^s \big) \boldsymbol\theta_{t-s-1}, \\
    & = \boldsymbol\theta_{t-s-1}^{-1} \mat{G}_t^s \boldsymbol\theta_{t-s-1},
\end{align*}
which results in the equality for the oblique projections. Furthermore,
\begin{equation*}
    \boldsymbol\theta_{t-s-1} \mat{G}_t^{(s+1)} = \mat{G}_t^s \boldsymbol\theta_{t-s-1}.
\end{equation*}
Applying the above to Eq.~(\ref{eq:lemma_gts}) results in
\begin{align*}
    \overline{\mat{x}}_{\epsilon, t+1} - \mat{x}_{t+1} & = \boldsymbol\theta_t \mat{G}_t^0 (\boldsymbol\theta_{t-1} \boldsymbol\theta_{t-2} \cdots \boldsymbol\theta_{0}) (\mat{G}_{t-1}^{t-1} \mat{G}_{t-2}^{t-2} \cdots \mat{G}_{0}^{0}) (\overline{\mat{x}}_{\epsilon, 0} - \mat{x}_{0}), \\
    & = (\boldsymbol\theta_t \boldsymbol\theta_{t-1}) \mat{G}_t^1 (\boldsymbol\theta_{t-2} \boldsymbol\theta_{t-3} \cdots \boldsymbol\theta_0) (\mat{G}_{t-1}^{t-1} \mat{G}_{t-2}^{t-2} \cdots \mat{G}_{0}^{0}) (\overline{\mat{x}}_{\epsilon, 0} - \mat{x}_{0}), \\
    & = (\boldsymbol\theta_t \boldsymbol\theta_{t-1} \boldsymbol\theta_{t-2}) \mat{G}_t^2 (\boldsymbol\theta_{t-3} \boldsymbol\theta_{t-4} \cdots \boldsymbol\theta_0) (\mat{G}_{t-1}^{t-1} \mat{G}_{t-2}^{t-2} \cdots \mat{G}_{0}^{0}) (\overline{\mat{x}}_{\epsilon, 0} - \mat{x}_{0}), \\
    & = (\boldsymbol\theta_t \boldsymbol\theta_{t-1} \cdots \boldsymbol\theta_{0}) (\mat{G}_{t}^{t} \mat{G}_{t-1}^{t-1} \cdots \mat{G}_{0}^{0}) (\overline{\mat{x}}_{\epsilon, 0} - \mat{x}_{0}).
\end{align*}
\end{proof}

\begin{Lemma}
\label{lemma:Ft}
Let 
\begin{equation*}
    \mat{F}_t \vcentcolon = \mat{G}_{t-1}^{(t-1)} \mat{G}_{t-2}^{(t-2)} \cdots \mat{G}_{0}^{0}, \hspace{0.3cm} t \geq 1.
\end{equation*}
Then,
\begin{equation*}
    \mat{F}_t = \alpha^t \cdot \mat{I} + (1 - \alpha) \sum_{s=0}^{t-1} \alpha^s \mat{P}_s^s.
\end{equation*}
\end{Lemma}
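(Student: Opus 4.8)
The plan is to prove the claim by induction on $t$, exploiting the telescoping structure of the product together with the algebra of projections that share a common range. The central observation, which I would establish first, is that all the projections $\mat{P}_s^s$ appearing in the product have the \emph{same} range, namely $Z_{\parallel}^0$: indeed Lemma \ref{lemma:pts}(2) states that $\mat{P}_t^s$ is a projection onto $Z_{\parallel}^{t-s}$, so setting the two indices equal gives $range(\mat{P}_s^s) = Z_{\parallel}^{0}$ for every $s$. Consequently, by Proposition \ref{proposition:projections}(3), for any indices $i,j$ we have the one-sided collapse $\mat{P}_i^i \mat{P}_j^j = \mat{P}_j^j$, in which the inner (right) factor survives.

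For the base case $t=1$ we simply have $\mat{F}_1 = \mat{G}_0^0 = \alpha \mat{I} + (1-\alpha)\mat{P}_0^0$, which agrees with the stated formula. For the inductive step I would write $\mat{F}_{t+1} = \mat{G}_t^t \mat{F}_t$ and substitute both $\mat{G}_t^t = \alpha \mat{I} + (1-\alpha)\mat{P}_t^t$ and the inductive hypothesis $\mat{F}_t = \alpha^t \mat{I} + (1-\alpha)\sum_{s=0}^{t-1}\alpha^s \mat{P}_s^s$. Expanding the product yields four groups of terms: a term $\alpha^{t+1}\mat{I}$, a term $\alpha(1-\alpha)\sum_{s=0}^{t-1}\alpha^s \mat{P}_s^s$, an isolated term $(1-\alpha)\alpha^t \mat{P}_t^t$, and a cross term $(1-\alpha)^2 \sum_{s=0}^{t-1}\alpha^s\, \mat{P}_t^t \mat{P}_s^s$. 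The key simplification is to apply $\mat{P}_t^t \mat{P}_s^s = \mat{P}_s^s$ to the cross term, after which the two sums over $s \le t-1$ combine with coefficient $(1-\alpha)\big[\alpha + (1-\alpha)\big] = (1-\alpha)$, while the isolated $\mat{P}_t^t$ term extends the sum to the new index $s=t$. This recovers $\mat{F}_{t+1} = \alpha^{t+1}\mat{I} + (1-\alpha)\sum_{s=0}^{t}\alpha^s \mat{P}_s^s$, closing the induction.

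The only genuinely substantive step is the range identification and the ensuing collapse $\mat{P}_t^t \mat{P}_s^s = \mat{P}_s^s$; everything afterward is bookkeeping of the geometric coefficients. The subtlety worth flagging is that the $\mat{P}_s^s$ are \emph{oblique} projections for $s \geq 1$ (conjugates of the orthogonal $\mat{P}_t$ by the generally non-orthogonal $\boldsymbol\theta$), so one cannot invoke orthogonality or self-adjointness; it is precisely Proposition \ref{proposition:projections}(3), valid for arbitrary projections sharing a range, that makes the argument go through. I would also remark that no commutativity among the $\mat{P}_s^s$ is needed beyond this one-sided rule, and that the identity holds for all $\alpha \in [0,1]$ including the degenerate endpoint $\alpha = 1$, where both sides collapse to $\mat{I}$.
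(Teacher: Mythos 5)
Your proof is correct and follows essentially the same route as the paper's: induction on $t$, expanding $\mat{F}_{t+1} = \mat{G}_t^t \mat{F}_t$, identifying $range(\mat{P}_s^s) = Z_{\parallel}^0$ for all $s$ via Lemma \ref{lemma:pts}(2), and collapsing the cross term with $\mat{P}_t^t \mat{P}_s^s = \mat{P}_s^s$ from Proposition \ref{proposition:projections}(3) so that the coefficients recombine as $(1-\alpha)^2 + \alpha(1-\alpha) = 1-\alpha$. Your explicit remark that the projections are oblique and that only the one-sided absorption rule (not self-adjointness or commutativity) is needed is a point the paper leaves implicit.
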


\begin{proof}
We prove it by induction on $t$. Recall the definition of $\mat{G}_t^s \vcentcolon = \alpha \cdot \mat{I} + (1 - \alpha) \cdot \mat{P}_t^s$. When $t = 1$, 
\begin{equation*}
    \mat{F}_1 = \mat{G}_0^0 = \alpha \cdot \mat{I} + (1 - \alpha) \cdot \mat{P}_0^0.
\end{equation*}
Suppose that it is true for $t$ such that
\begin{equation*}
    \mat{F}_t = \mat{G}_{t-1}^{(t-1)} \mat{G}_{t-2}^{(t-2)} \cdots \mat{G}_{0}^{0} = \alpha^t \cdot \mat{I} + (1 - \alpha) \sum_{s=0}^{t-1} \alpha^s \mat{P}_s^s,
\end{equation*}
for $(t+1)$,
\begin{align*}
    \mat{F}_{t+1} & = \mat{G}_t^t F(t), \\
    & = (\alpha \cdot \mat{I} + (1 - \alpha) \cdot \mat{P}_t^t) \mat{F}_t, \\
    & = (\alpha \cdot \mat{I} + (1 - \alpha) \cdot \mat{P}_t^t) (\alpha^t \cdot \mat{I} + (1 - \alpha) \sum_{s=0}^{t-1} \alpha^s \mat{P}_s^s), \\
    & = \alpha^{t+1} \cdot \mat{I} + \alpha^t (1 - \alpha) \mat{P}_t^t + (1 - \alpha)^2 \sum_{s=0}^{t-1} \alpha^s \cdot \mat{P}_t^t \mat{P}_s^s + \alpha (1 - \alpha) \sum_{s=0}^{t-1} \alpha^s \cdot \mat{P}_s^s.
\end{align*}
Recall Lemma \ref{lemma:pts}, $range(\mat{P}_t^t) = range(\mat{P}_s^s) = Z_{\parallel}^{0}$. According to Proposition \ref{proposition:projections} (3), $\mat{P}_t^t \mat{P}_s^s = \mat{P}_s^s$.
Hence,
\begin{align*}
    \mat{F}_{t+1} & = \alpha^{t+1} \cdot \mat{I} + \alpha^t (1 - \alpha ) \cdot \mat{P}_t^t + (1 - \alpha) \sum_{s=0}^{t-1} \alpha^s \cdot \mat{P}_s^s, \\
    & = \alpha^{t+1} \cdot \mat{I} + (1 - \alpha) \sum_{s=0}^{t} \alpha^s \cdot \mat{P}_s^s.
\end{align*}
\end{proof}

\begin{Lemma}
\label{lemma:condi}
Let $\mat{V} \in \mathbb{R}^{d \times r}$ be a matrix whose columns are an orthogonal basis for a subspace $\mathcal{D}$, and $\boldsymbol\theta \in \mathbb{R}^{d \times d}$ be invertible. Let $\mat{P} = \mat{V} \mat{V}^T$ be the orthogonal projection onto $\mathcal{D}$. Denote by $\hat{\mat{P}}$ the orthogonal projection onto $\boldsymbol\theta \mathcal{D} \vcentcolon = \{ \boldsymbol\theta \mat{x}: ~ \mat{x} \in \mathcal{D} \}$. Then
\begin{enumerate}[leftmargin=*]
   \item $\boldsymbol\theta^{-1} \hat{\mat{P}} \boldsymbol\theta$ is an oblique projection onto $\mathcal{D}$.
   \item $\lVert \boldsymbol\theta^{-1} \hat{\mat{P}} \boldsymbol\theta - \mat{P} \rVert_2 \leq \big(1 + \kappa(\boldsymbol\theta)^2 \big) \cdot \lVert \mat{I} - \boldsymbol\theta^T \boldsymbol\theta \rVert_2$.
\end{enumerate}
In general, the last inequality shows that $\boldsymbol\theta^{-1} \hat{\mat{P}} \boldsymbol\theta = \mat{P}$, if $\boldsymbol\theta$ is orthogonal.
\end{Lemma}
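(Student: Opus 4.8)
The plan is to prove the two assertions separately: part~1 is a purely structural (algebraic) statement about idempotency and range, while part~2 is a quantitative perturbation estimate, from which the orthogonal case falls out immediately. Throughout write $\mat{A} \vcentcolon = \boldsymbol\theta^{-1} \hat{\mat{P}} \boldsymbol\theta$, $\mat{Q} \vcentcolon = \mat{I} - \mat{P}$, and $\mat{E} \vcentcolon = \boldsymbol\theta^T \boldsymbol\theta - \mat{I}$, so that $\lVert \mat{I} - \boldsymbol\theta^T\boldsymbol\theta \rVert_2 = \lVert \mat{E} \rVert_2$. For part~1, conjugation preserves idempotency: $\mat{A}^2 = \boldsymbol\theta^{-1} \hat{\mat{P}} (\boldsymbol\theta \boldsymbol\theta^{-1}) \hat{\mat{P}} \boldsymbol\theta = \boldsymbol\theta^{-1} \hat{\mat{P}}^2 \boldsymbol\theta = \boldsymbol\theta^{-1} \hat{\mat{P}} \boldsymbol\theta = \mat{A}$, using $\hat{\mat{P}}^2 = \hat{\mat{P}}$. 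Since $\hat{\mat{P}}$ has range $\boldsymbol\theta \mathcal{D}$, the range of $\mat{A}$ is $\boldsymbol\theta^{-1}(\boldsymbol\theta \mathcal{D}) = \mathcal{D}$, so $\mat{A}$ is a projection onto $\mathcal{D}$. Its kernel is $\boldsymbol\theta^{-1}\big((\boldsymbol\theta\mathcal{D})^\perp\big)$, which coincides with $\mathcal{D}^\perp$ precisely when $\boldsymbol\theta$ is orthogonal; otherwise $\mat{A} \neq \mat{A}^T$ and the projection is oblique.

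For part~2 the key is an explicit closed form for $\mat{A} - \mat{P}$ in terms of $\mat{E}$. Since $\hat{\mat{P}}$ is the orthogonal projector onto the column space of $\boldsymbol\theta \mat{V}$, I would use $\hat{\mat{P}} = \boldsymbol\theta \mat{V} \mat{W}^{-1} \mat{V}^T \boldsymbol\theta^T$ with $\mat{W} \vcentcolon = \mat{V}^T \boldsymbol\theta^T \boldsymbol\theta \mat{V}$, so that $\mat{A} = \mat{V} \mat{W}^{-1} \mat{V}^T \boldsymbol\theta^T \boldsymbol\theta = \mat{V} \mat{W}^{-1} \mat{V}^T (\mat{I} + \mat{E})$. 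The decisive algebraic facts are $\mat{V}^T \mat{Q} = \mat{0}$ (orthonormality of the columns of $\mat{V}$) and $\mat{V}^T (\mat{I} + \mat{E}) \mat{V} = \mat{W}$; writing $\mat{A} - \mat{P} = (\mat{A}-\mat{P})\mat{Q}$ (both are projections onto $\mathcal{D}$, hence agree on $\mathcal{D}$) and substituting, these collapse the difference to
\[
  \mat{A} - \mat{P} = \mat{V} \mat{W}^{-1} \mat{V}^T \mat{E} \mat{Q}.
\]

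I would then split $\mat{W}^{-1} = \mat{I}_r - (\mat{I}_r - \mat{W}^{-1})$ to separate a leading term $\mat{P}\mat{E}\mat{Q}$, of norm at most $\lVert \mat{E} \rVert_2$ since $\lVert \mat{P}\rVert_2 = \lVert \mat{Q}\rVert_2 = 1$, from a correction $\mat{V}(\mat{I}_r - \mat{W}^{-1})\mat{V}^T \mat{E}\mat{Q}$, of norm at most $\lVert \mat{I}_r - \mat{W}^{-1}\rVert_2\,\lVert \mat{E}\rVert_2$. The crux is bounding $\lVert \mat{I}_r - \mat{W}^{-1}\rVert_2$ by $\kappa(\boldsymbol\theta)^2$. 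For this I would invoke eigenvalue interlacing: for any unit $\mat{y}\in\mathbb{R}^r$, $\mat{y}^T \mat{W}\mat{y} = \lVert \boldsymbol\theta \mat{V}\mat{y}\rVert_2^2 \in [\sigma_{\min}(\boldsymbol\theta)^2, \sigma_{\max}(\boldsymbol\theta)^2]$ because $\lVert \mat{V}\mat{y}\rVert_2 = 1$, so every eigenvalue $\mu$ of $\mat{W}$ lies in $[\sigma_{\min}^2,\sigma_{\max}^2]$ and $\lVert \mat{I}_r - \mat{W}^{-1}\rVert_2 = \max_\mu |1 - \mu^{-1}|$. A short case analysis on the position of this interval relative to $1$, combined with $\kappa^2 = \sigma_{\max}^2/\sigma_{\min}^2$ and $\lVert \mat{E}\rVert_2 = \max(|1-\sigma_{\min}^2|,|1-\sigma_{\max}^2|)$, gives $\max_\mu|1-\mu^{-1}| \le \kappa^2$; adding the two contributions yields the claimed $(1+\kappa^2)\lVert \mat{E}\rVert_2$. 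Finally, if $\boldsymbol\theta$ is orthogonal then $\mat{E} = \mat{0}$, and the closed form forces $\mat{A} = \mat{P}$, giving the last assertion.

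I expect the main obstacle to be exactly this last estimate: extracting the sharp coefficient $\kappa^2$ rather than the cruder $\lVert \boldsymbol\theta^{-1}\rVert_2^2 = \sigma_{\min}^{-2}$ produced by a one-line bound on $\lVert \mat{W}^{-1}\rVert_2$ is what forces the two-term split and a delicate treatment of the regime where all singular values of $\boldsymbol\theta$ are below $1$; there the coefficient is controlled only through $\sigma_{\min}^2 + \sigma_{\max}^2 \ge 1$, which holds automatically once $\boldsymbol\theta$ is normalized so that $\sigma_{\max} = 1$, as in the application of this lemma to the rescaled factors $\overline{\boldsymbol\theta}_s$.
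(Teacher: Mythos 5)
Part 1 is fine, and your identity $\mat{A} - \mat{P} = \mat{V}\mat{W}^{-1}\mat{V}^T\mat{E}\mat{Q}$ with $\mat{W} = \mat{V}^T\boldsymbol\theta^T\boldsymbol\theta\mat{V}$ is correct and is a cleaner starting point than the paper's add-and-subtract of $\mat{V}\mat{V}^T\boldsymbol\theta^T\boldsymbol\theta$. The problem is the step you yourself flag as the crux: the inequality $\lVert \mat{I}_r - \mat{W}^{-1}\rVert_2 = \max_\mu |1-\mu^{-1}| \leq \kappa(\boldsymbol\theta)^2$ is simply false for a general invertible $\boldsymbol\theta$. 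Take $\boldsymbol\theta = \tfrac{1}{2}\mat{I}$: every eigenvalue of $\mat{W}$ equals $\tfrac14$, so $\max_\mu|1-\mu^{-1}| = 3$, while $\kappa(\boldsymbol\theta)^2 = 1$. Your own case analysis reveals that the inequality needs $\sigma_{\min}(\boldsymbol\theta)^2 + \sigma_{\max}(\boldsymbol\theta)^2 \geq 1$, and you discharge this by asserting that the lemma is only ever applied to factors normalized so that $\sigma_{\max}=1$. But the lemma is stated for an arbitrary invertible $\boldsymbol\theta$, and the paper applies it (via its Corollary) to the unnormalized products $\overline{\boldsymbol\theta}_s = \boldsymbol\theta_{s-1}\cdots\boldsymbol\theta_0$ with no rescaling, so you cannot import that hypothesis. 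As written, your argument proves the bound only on a strict subset of the lemma's domain of validity.

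The fix is a different grouping of the error term, which is essentially what the paper does. Instead of splitting $\mat{W}^{-1} = \mat{I}_r - (\mat{I}_r - \mat{W}^{-1})$ after the factor $\mat{E}\mat{Q}$ has already been extracted, keep the single copy of $\lVert\mat{E}\rVert_2$ attached to $\lVert\mat{I}_r - \mat{W}\rVert_2$ rather than to $\mat{E}\mat{Q}$: the paper writes the difference as $\mat{V}(\mat{W}^{-1}-\mat{I}_r)\mat{V}^T\,\boldsymbol\theta^T\boldsymbol\theta + (\mat{P}\boldsymbol\theta^T\boldsymbol\theta - \mat{P})$ and bounds the first summand by $\lVert\mat{W}^{-1}\rVert_2\,\lVert\mat{I}_r-\mat{W}\rVert_2\,\lVert\boldsymbol\theta^T\boldsymbol\theta\rVert_2 \leq \lVert(\boldsymbol\theta^T\boldsymbol\theta)^{-1}\rVert_2\,\lVert\mat{E}\rVert_2\,\lVert\boldsymbol\theta^T\boldsymbol\theta\rVert_2 = \kappa(\boldsymbol\theta)^2\lVert\mat{E}\rVert_2$ and the second by $\lVert\mat{E}\rVert_2$. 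Here the explicit factor $\lVert\boldsymbol\theta^T\boldsymbol\theta\rVert_2 = \sigma_{\max}^2$ supplies exactly the numerator of $\kappa^2$ that your version has to conjure from the unavailable condition $\lVert\mat{E}\rVert_2 \leq \sigma_{\max}^2$. (The bound $\lVert\mat{W}^{-1}\rVert_2 \leq \lVert(\boldsymbol\theta^T\boldsymbol\theta)^{-1}\rVert_2$ via $\lambda_{\min}(\mat{W}) \geq \lambda_{\min}(\boldsymbol\theta^T\boldsymbol\theta)$ is the same interlacing argument you already have.) With that regrouping the rest of your write-up, including the orthogonal case $\mat{E}=\mat{0} \Rightarrow \mat{A}=\mat{P}$, goes through.
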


\begin{proof}
\hspace{0.3cm}
\begin{enumerate}[leftmargin=*]
   \item $(\boldsymbol\theta^{-1} \hat{\mat{P}} \boldsymbol\theta)^2 = \boldsymbol\theta^{-1} \hat{\mat{P}}^2 \boldsymbol\theta = \boldsymbol\theta^{-1} \hat{\mat{P}} \boldsymbol\theta$, therefore, $\boldsymbol\theta^{-1} \hat{\mat{P}} \boldsymbol\theta$ is an projection.
   \item Since $\hat{\mat{P}}$ is orthogonal projection onto the row space of $\boldsymbol\theta \mat{V}$, then
   \begin{align*}
       \hat{\mat{P}} & = \boldsymbol\theta \mat{V} \big[ (\boldsymbol\theta \mat{V})^T (\boldsymbol\theta \mat{V}) \big]^{-1} (\boldsymbol\theta \mat{V})^T, \\
       & = \boldsymbol\theta \mat{V} \big[ \mat{V}^T \boldsymbol\theta^T \boldsymbol\theta \mat{V} \big]^{-1} \mat{V}^T \boldsymbol\theta^T. 
   \end{align*}
   \begin{align*}
       \boldsymbol\theta^{-1} \hat{\mat{P}} \boldsymbol\theta = \mat{V} \big[ \mat{V}^T \boldsymbol\theta^T \boldsymbol\theta \mat{V} \big]^{-1} \mat{V}^T \boldsymbol\theta^T  \boldsymbol\theta.
   \end{align*}
   Furthermore,
   \begin{align*}
       \lVert \boldsymbol\theta^{-1} \hat{\mat{P}} \boldsymbol\theta - \mat{P} \rVert_2 & = \lVert \mat{V} \big[ \mat{V}^T \boldsymbol\theta^T \boldsymbol\theta \mat{V} \big]^{-1} \mat{V}^T \boldsymbol\theta^T  \boldsymbol\theta - \mat{V} \mat{V}^T \rVert_2, \\
       & \leq \lVert \mat{V} \big[ \mat{V}^T \boldsymbol\theta^T \boldsymbol\theta \mat{V} \big]^{-1} \mat{V}^T \boldsymbol\theta^T  \boldsymbol\theta - \mat{V} \mat{V}^T \boldsymbol\theta^T \boldsymbol\theta \rVert_2 + \lVert \mat{V} \mat{V}^T \boldsymbol\theta^T \boldsymbol\theta - \mat{V} \mat{V}^T \rVert_2, \\
       & \leq \lVert \mat{V} \big( [\mat{V}^T \boldsymbol\theta^T \boldsymbol\theta \mat{V}]^{-1} - \mat{I} \big) \mat{V}^T \rVert_2 \cdot \lVert \boldsymbol\theta^T \boldsymbol\theta \rVert_2 +  \lVert \boldsymbol\theta^T \boldsymbol\theta - \mat{I} \rVert_2, \\
       & \leq \lVert [ \mat{V}^T \boldsymbol\theta^T \boldsymbol\theta \mat{V}]^{-1} \rVert_2 \cdot \lVert \mat{I} - \mat{V}^T \boldsymbol\theta^T \boldsymbol\theta \mat{V} \rVert_2 \cdot \lVert \boldsymbol\theta^T \boldsymbol\theta \rVert_2 + \lVert \boldsymbol\theta^T \boldsymbol\theta - \mat{I} \rVert_2, \\
       & \leq \lVert [\mat{V}^T \boldsymbol\theta^T \boldsymbol\theta \mat{V}]^{-1} \rVert_2 \cdot \lVert \mat{I} - \boldsymbol\theta^T \boldsymbol\theta \rVert_2 \cdot \lVert \boldsymbol\theta^T \boldsymbol\theta \rVert_2 + \lVert \boldsymbol\theta^T \boldsymbol\theta - \mat{I} \rVert_2.
   \end{align*}
   We further bound $ \lVert [\mat{V}^T \boldsymbol\theta^T \boldsymbol\theta \mat{V}]^{-1} \rVert_2$.
   \begin{align*}
       \lVert [\mat{V}^T \boldsymbol\theta^T \boldsymbol\theta \mat{V}]^{-1} \rVert_2 & = \big( \lambda_{min} (\mat{V}^T \boldsymbol\theta^T \boldsymbol\theta \mat{V}) \big)^{-1}, \\
       & = \big( \inf \limits_{\lVert \mat{x} \rVert_2 = 1} \mat{x}^T \mat{V}^T \boldsymbol\theta^T \boldsymbol\theta \mat{V} \mat{x} \big)^{-1}, \\
       & \leq \big( \inf \limits_{\lVert \mat{x}' \rVert_2 = 1} (\mat{x}')^T \boldsymbol\theta^T \boldsymbol\theta \mat{x}' \big)^{-1}, \\
       & = \big( \lambda_{min} (\boldsymbol\theta^T \boldsymbol\theta) \big)^{-1} ,\\
       & = \lVert (\boldsymbol\theta^T \boldsymbol\theta)^{-1} \rVert_2.
   \end{align*}
   Hence, we have
   \begin{align*}
       \lVert \boldsymbol\theta^{-1} \hat{\mat{P}} \boldsymbol\theta - \mat{P} \rVert_2 & \leq \big( 1 + \lVert \boldsymbol\theta^T \boldsymbol\theta \rVert_2 \cdot \lVert (\boldsymbol\theta^T \boldsymbol\theta)^{-1} \rVert_2 \big) \cdot \lVert \mat{I} - \boldsymbol\theta^T \boldsymbol\theta \rVert_2, \\
       & = \big( 1 + \kappa(\boldsymbol\theta)^2 \big) \cdot \lVert \mat{I} - \boldsymbol\theta^T \boldsymbol\theta \rVert_2.
   \end{align*}
\end{enumerate}
\end{proof}

\begin{corollary}
\label{corollary}
Let $t \geq 1$. Then for each $s = 0, 1, \cdots, t$, we have
\begin{equation*}
    \lVert \mat{P}_s^s - \mat{P}_0 \rVert_2 \leq \big( 1 + \kappa(\overline{\boldsymbol\theta}_s)^2 \big)\cdot \lVert \mat{I} - \overline{\boldsymbol\theta}_s^T \overline{\boldsymbol\theta}_s \rVert_2, 
\end{equation*}
where 
\begin{itemize}
    \item $\overline{\theta} \vcentcolon = \boldsymbol\theta_{s-1} \cdots \boldsymbol\theta_0, ~ s \geq 1$,
    \item $\overline{\theta} \vcentcolon = \mat{I}, ~ s = 0$.
\end{itemize}
\end{corollary}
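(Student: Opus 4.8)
The plan is to recognize that this corollary is essentially a direct application of Lemma~\ref{lemma:condi}, once $\mat{P}_s^s$ has been rewritten in closed form. First I would unfold the recursive definition $\mat{P}_t^{(s+1)} := \boldsymbol\theta_{t-s-1}^{-1} \mat{P}_t^s \boldsymbol\theta_{t-s-1}$ with $t = s$ held fixed, iterating the superscript from $0$ up to $s$. Telescoping the $s$ nested conjugations (or equivalently a one-line induction) gives
\begin{equation*}
    \mat{P}_s^s = \boldsymbol\theta_0^{-1} \boldsymbol\theta_1^{-1} \cdots \boldsymbol\theta_{s-1}^{-1} \, \mat{P}_s \, \boldsymbol\theta_{s-1} \cdots \boldsymbol\theta_1 \boldsymbol\theta_0 = \overline{\boldsymbol\theta}_s^{-1} \mat{P}_s \, \overline{\boldsymbol\theta}_s,
\end{equation*}
where $\overline{\boldsymbol\theta}_s = \boldsymbol\theta_{s-1} \cdots \boldsymbol\theta_0$ and $\mat{P}_s = \mat{P}_s^0$ is the \emph{orthogonal} projection onto $Z_\parallel^s$.

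Next I would match this expression to the quantity bounded in Lemma~\ref{lemma:condi}. Take $\mathcal{D} = Z_\parallel^0$, so that $\mat{P} = \mat{P}_0$, and take $\boldsymbol\theta = \overline{\boldsymbol\theta}_s$. The hypothesis of that lemma requires $\mat{P}_s$ to be the orthogonal projection onto $\overline{\boldsymbol\theta}_s \mathcal{D} = \overline{\boldsymbol\theta}_s Z_\parallel^0$. This is precisely the subspace evolution property already used inside Lemma~\ref{lemma:pts}: the range computation there rests on $Z_\parallel^{t-s-1} = \boldsymbol\theta_{t-s-1}^{-1} Z_\parallel^{t-s}$, i.e. $\boldsymbol\theta_t Z_\parallel^t = Z_\parallel^{t+1}$. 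Composing these one-step identities yields $Z_\parallel^s = \overline{\boldsymbol\theta}_s Z_\parallel^0$, so $\mat{P}_s$ indeed plays the role of $\hat{\mat{P}}$ (the orthogonal projection onto $\boldsymbol\theta \mathcal{D}$) in Lemma~\ref{lemma:condi}.

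With these identifications in place, Lemma~\ref{lemma:condi}(2) applies verbatim and gives
\begin{equation*}
    \lVert \mat{P}_s^s - \mat{P}_0 \rVert_2 = \lVert \overline{\boldsymbol\theta}_s^{-1} \mat{P}_s \overline{\boldsymbol\theta}_s - \mat{P}_0 \rVert_2 \leq \big( 1 + \kappa(\overline{\boldsymbol\theta}_s)^2 \big) \cdot \lVert \mat{I} - \overline{\boldsymbol\theta}_s^T \overline{\boldsymbol\theta}_s \rVert_2,
\end{equation*}
which is exactly the claimed bound. The boundary case $s = 0$ is trivial: since $\overline{\boldsymbol\theta}_0 = \mat{I}$ and $\mat{P}_0^0 = \mat{P}_0$, both sides vanish.

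I do not anticipate a serious obstacle, since the corollary is a specialization of Lemma~\ref{lemma:condi} to the composite map $\overline{\boldsymbol\theta}_s$ rather than a single layer $\boldsymbol\theta$. The only point demanding care is the verification that $\mat{P}_s$ is the orthogonal projection onto $\overline{\boldsymbol\theta}_s Z_\parallel^0$ — that is, correctly tracking which projection is orthogonal and which is oblique so that the lemma's hypotheses are met with the composite map in place of one $\boldsymbol\theta$.
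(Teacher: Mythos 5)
Your proposal is correct and follows the same route as the paper: the paper's proof of the corollary is precisely the observation that $\mat{P}_s^s = (\overline{\boldsymbol\theta}_s)^{-1}\mat{P}_s\overline{\boldsymbol\theta}_s$ followed by an application of Lemma~\ref{lemma:condi} with $\boldsymbol\theta=\overline{\boldsymbol\theta}_s$. You simply make explicit the details the paper leaves implicit (the telescoped conjugation, the identification $Z_{\parallel}^s=\overline{\boldsymbol\theta}_s Z_{\parallel}^0$ so that $\mat{P}_s$ is the orthogonal projection required by the lemma, and the trivial $s=0$ case), all of which check out.
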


Observe that $\mat{P}_s^s = (\overline{\boldsymbol\theta}_s)^{-1} \mat{P}_s \overline{\boldsymbol\theta}_s$. Using Lemma \ref{lemma:condi}, we arrive at the main theorem.

\begin{repTheorem}{Theorem1}
For $t \geq 1$, we have the error estimation
\begin{equation*}
    \lVert \overline{\mat{x}}_{\epsilon, t} - \mat{x}_{t} \rVert_2^2 \leq \lVert \boldsymbol\theta_{t-1} \cdots \boldsymbol\theta_{0} \rVert_2^2 \cdot \bigg( \alpha^{2t} \lVert \mat{z}^{\perp} \rVert_2^2 + \lVert \mat{z}^{\parallel} \rVert_2^2 + \gamma_t \lVert \mat{z} \rVert_2^2 \big( \gamma_t \alpha^2 (1 - \alpha^{t-1})^2 + 2 (\alpha - \alpha^t) \big) \bigg).
\end{equation*}
where $\gamma_t \vcentcolon = \max \limits_{s \leq t} \big(1 +  \kappa (\overline{\boldsymbol\theta}_s)^2 \big) \lVert \mat{I} - \overline{\boldsymbol\theta}_s^T \overline{\boldsymbol\theta}_s \rVert_2$, and $\alpha = \frac{c} {1 + c}$, $c$ represents the control regularization. In particular, the equality
\begin{equation*}
    \lVert \overline{\mat{x}}_{\epsilon, t} - \mat{x}_{t} \rVert_2^2 =  \alpha^{2t} \lVert \mat{z}^{\perp} \rVert_2^2 + \lVert \mat{z}^{\parallel} \rVert_2^2.
\end{equation*}
holds when all $\boldsymbol\theta_t$ are orthogonal.
\end{repTheorem}

\begin{proof}
The input perturbation $\mat{z} = \overline{\mat{x}}_{\epsilon, 0} - \mat{x}_{0}$ can be written as $\mat{z} = \mat{z}^{\parallel} + \cdot \mat{z}^{\perp}$, where $\mat{z}^{\parallel} \in Z_{\parallel}$ and $\mat{z}^{\perp} \in Z_{\perp}$, where $\mat{z}^{\parallel}$ and $\mat{z}^{\perp}$ are vectors such that
\begin{itemize}
    \item $\mat{z}^{\parallel} \cdot \mat{z}^{\perp} = 0$ almost surely.
    \item $\mat{z}^{\parallel}$, $\mat{z}^{\perp}$ have uncorrelated components.
    \item $\mat{z}^{\parallel} \in \mathcal{D}$, and $\mat{z}^{\perp} \in \mathcal{D}^{\perp}$.
\end{itemize}
Since $\mat{z}^{\parallel}$ and $\mat{z}^{\perp}$ are orthogonal almost surely, recall Lemma \ref{lemma:gts},
\begin{align}
    \lVert \overline{\mat{x}}_{\epsilon, t} - \mat{x}_{t} \rVert_2^2 & = \lVert (\boldsymbol\theta_{t-1} \boldsymbol\theta_{t-2} \cdots \boldsymbol\theta_{0}) (\mat{G}_{t-1}^{t-1} \cdots \mat{G}_{0}^{0}) z \rVert_2^2, \nonumber \\
    & \leq \lVert \boldsymbol\theta_{t-1} \boldsymbol\theta_{t-2} \cdots \boldsymbol\theta_{0} \rVert_2^2 \cdot \lVert (\mat{G}_{t-1}^{t-1} \cdots \mat{G}_{0}^{0}) z \rVert_2^2,
    \label{eq:error_estimation}
\end{align}
For the term $ \lVert (\mat{G}_{t-1}^{t-1} \cdots \mat{G}_{0}^{0}) z \rVert_2^2$, recall Lemma \ref{lemma:Ft},
\begin{align*}
    \lVert (\mat{G}_{t-1}^{t-1} \cdots \mat{G}_{0}^{0}) z \rVert_2^2 & = \lVert \Big( \alpha^t \cdot \mat{I} + (1 - \alpha) \sum_{s=0}^{t-1} \alpha^s \cdot \mat{P}_s^s \Big) \mat{z} \rVert_2^2, \\
    & = \lVert \alpha^t \mat{z} + (1 - \alpha) \sum_{s=0}^{t-1} \alpha^s \mat{P}_0 \mat{z} + (1 - \alpha) \sum_{s=0}^{t-1} \alpha^s (\mat{P}_s^s - \mat{P}_0) \mat{z} \rVert_2^2, \\
    & = \lVert \alpha^t \mat{z} + (1 - \alpha^t) \mat{z}^{\parallel} + (1 - \alpha) \sum_{s=0}^{t-1} \alpha^s (\mat{P}_s^s - \mat{P}_0) \mat{z} \rVert_2^2, 
\end{align*}
in the above, $\mat{P}_0$ is an orthogonal projection on $t=0$ (input data space), therefore, $\mat{P}_0 \mat{z} = \mat{z}^{\parallel}$. Furthermore, when $s = 0$, $\mat{P}_s^s - \mat{P}_0 = \mat{0}$. Thus,
\begin{align*}
    & \hspace{0.4cm} \lVert (\mat{G}_{t-1}^{t-1} \cdots \mat{G}_{0}^{0}) z \rVert_2^2 \\
    & = \alpha^{2t} \lVert \mat{z} \rVert_2^2 + (1 - \alpha^t)^2 \lVert \mat{z}^{\parallel} \rVert_2^2 + (1 - \alpha)^2 \sum_{s, q=1}^{t-1} \alpha^s \alpha^q \mat{z}^T (\mat{P}_s^s - \mat{P}_0)^T (\mat{P}_q^q - \mat{P}_0) \mat{z} \\ & \hspace{0.5cm} + 2 \alpha^t (1 - \alpha^t) \lVert \mat{z}^{\parallel} \rVert_2^2 + 2 \alpha^t (1 - \alpha) \sum_{s=1}^{t-1} \alpha^s \mat{z}^T (\mat{P}_s^s - \mat{P}_0) \mat{z} \\
    & \hspace{0.5cm} + 2 (1 - \alpha^t) (1 - \alpha) \sum_{s=1}^{t-1} \alpha^s (\mat{z}^{\parallel})^T (\mat{P}_s^s - \mat{P}_0) \mat{z}, \\
    & = \alpha^{2t} \lVert \mat{z}^{\perp} \rVert_2^2 + \big( \alpha^{2t} + 2 \alpha^t (1 - \alpha^t) + (1 - \alpha^t)^2 \big) \lVert \mat{z}^{\parallel} \rVert_2^2 \\
    & \hspace{0.5cm} + (1 - \alpha)^2 \sum_{s, q=1}^{t-1} \alpha^s \alpha^q \mat{z}^T (\mat{P}_s^s - \mat{P}_0)^T (\mat{P}_q^q - \mat{P}_0) \mat{z} + 2 \alpha^t (1 - \alpha) \sum_{s=1}^{t-1} \alpha^s \mat{z}^T (\mat{P}_s^s - \mat{P}_0) \mat{z} \\
    & \hspace{0.5cm} + 2 (1 - \alpha^t) (1 - \alpha) \sum_{s=1}^{t-1} \alpha^s (\mat{z}^{\parallel})^T (\mat{P}_s^s - \mat{P}_0) \mat{z}, \\
    & = \alpha^{2t} \lVert \mat{z}^{\perp} \rVert_2^2 + \lVert \mat{z}^{\parallel} \rVert_2^2 + (1 - \alpha)^2 \sum_{s, q=1}^{t-1} \alpha^s \alpha^q \mat{z}^T (\mat{P}_s^s - \mat{P}_0)^T (\mat{P}_q^q - \mat{P}_0) \mat{z} \\
    & \hspace{0.5cm} + 2 \alpha^t (1 - \alpha) \sum_{s=1}^{t-1} \alpha^s \mat{z}^T (\mat{P}_s^s - \mat{P}_0) \mat{z} + 2 (1 - \alpha^t) (1 - \alpha) \sum_{s=1}^{t-1} \alpha^s (\mat{z}^{\parallel})^T (\mat{P}_s^s - \mat{P}_0) \mat{z}.
\end{align*}
Using Corollary \ref{corollary}, we have
\begin{itemize}[leftmargin=*]
   \item 
   \begin{align*}
       \mat{z}^T (\mat{P}_s^s - \mat{P}_0) \mat{z} & \leq \lVert \mat{z} \rVert_2^2 \cdot \lVert \mat{P}_s^s - \mat{P}_0 \rVert_2, \\
       & \leq \gamma_t \lVert \mat{z} \rVert_2^2.
   \end{align*}
   \item
   \begin{align*}
       \mat{z}^T (\mat{P}_s^s - \mat{P}_0)^T (\mat{P}_q^q - \mat{P}_0) \mat{z} & \leq \lVert \mat{z} \rVert_2^2 \cdot \lVert \mat{P}_s^s - \mat{P}_0 \rVert_2 \cdot \lVert \mat{P}_q^q - \mat{P}_0 \rVert_2, \\
       & \leq \gamma_t^2 \lVert \mat{z} \rVert_2^2.
   \end{align*}
   \item
   \begin{align*}
       (\mat{z}^{\parallel})^T (\mat{P}_s^s - \mat{P}_0) \mat{z} & \leq \gamma_t \lVert \mat{z}^{\parallel} \rVert_2 \cdot \lVert \mat{z} \rVert_2, \\
       & \leq \gamma_t \lVert \mat{z} \rVert_2^2.
   \end{align*}
\end{itemize}
Thus, we have
\begin{align*}
    \lVert (\mat{G}_{t-1}^{t-1} \cdots \mat{G}_{0}^{0}) z \rVert_2^2 & \leq \alpha^{2t} \lVert \mat{z}^{\perp} \rVert_2^2 + \lVert \mat{z}^{\parallel} \rVert_2^2 + \alpha^2 (1 - \alpha^{t-1})^2 \gamma_t^2 \lVert \mat{z} \rVert_2^2 + 2 \alpha^{t+1} (1 - \alpha^{t-1}) \gamma_t \lVert \mat{z} \rVert_2^2 \\
    & \hspace{0.5cm}  + 2 \alpha (1 - \alpha^t) (1 - \alpha^{t - 1}) \gamma_t \lVert \mat{z} \rVert_2^2, \\
    & = \alpha^{2t} \lVert \mat{z}^{\perp} \rVert_2^2 + \lVert \mat{z}^{\parallel} \rVert_2^2 + \gamma_t \lVert \mat{z} \rVert_2^2 \big( \gamma_t \alpha^2 (1 - \alpha^{t-1})^2 + 2 (\alpha - \alpha^t) \big).
\end{align*}
Recall the error estimation in Eq.~(\ref{eq:error_estimation}), 
\begin{align*}
    \lVert \overline{\mat{x}}_{\epsilon, t} - \mat{x}_{t} \rVert_2^2 & \leq \lVert \boldsymbol\theta_{t-1} \boldsymbol\theta_{t-2} \cdots \boldsymbol\theta_{0} \rVert_2^2 \cdot \lVert (\mat{G}_{t-1}^{t-1} \cdots \mat{G}_{0}^{0}) z \rVert_2^2, \\
    & \leq \lVert \boldsymbol\theta_{t-1} \cdots \boldsymbol\theta_{0} \rVert_2^2 \cdot \bigg( \alpha^{2t} \lVert \mat{z}^{\perp} \rVert_2^2 + \lVert \mat{z}^{\parallel} \rVert_2^2 + \gamma_t \lVert \mat{z} \rVert_2^2 \big( \gamma_t \alpha^2 (1 - \alpha^{t-1})^2 + 2 (\alpha - \alpha^t) \big) \bigg).
\end{align*}
In the specific case, when all $\boldsymbol\theta_t$ are orthogonal, 
\begin{align*}
    \gamma_t \vcentcolon & = \max \limits_{s \leq t} \big(1 + \kappa (\overline{\boldsymbol\theta}_s)^2 \big) \lVert \mat{I} - \overline{\boldsymbol\theta}_s^T \overline{\boldsymbol\theta}_s \rVert_2 \\
    & = 0.
\end{align*}
Thus, 
\begin{align*}
    \lVert \overline{\mat{x}}_{\epsilon, t} - \mat{x}_{t} \rVert_2^2 = \alpha^{2t} \lVert \mat{z}^{\perp} \rVert_2^2 + \lVert \mat{z}^{\parallel} \rVert_2^2.
\end{align*}
\end{proof}

\section{Appendix B ~ Details of Experimental Setting}
\label{appendixC}
\subsection{Network Configurations}
Since the proposed CLC-NN optimizes the entire state trajectory, it is important to have a relatively smooth state trajectory, in which case, when the reconstruction loss $\lVert \mathcal{E}_t (\mat{x}_t) - \mat{x}_t \rVert_2^2$ at layer $t$ is small, the reconstruction losses at its adjacent layers should be small. For this reason, we use residual neural network \citep{he2016deep} as network candidate to retain smoother dynamics. The configuration of the residual neural network used for both CIFAR-$10$ and CIFAR-$100$ is shown in Tab.~\ref{table:resnet}. 
\begin{table}[t]
\caption{ResNet for Both CIFAR-10 and CIFAR-100}
\begin{center}
\begin{tabular}{ P{3cm}|P{11cm}}
\hline
\hline
Structure & Configuration  \\
\hline
Initial Layer & Conv2d (input channel = $3$, output channel = $16$, kernel size = $3 \times 3$), 

BatchNorm2d(channel = $16$), 

Relu()  \\
\hline
Residual Block 1 & $\{$Conv2d (input channel = $16$, output channel = $16$, kernel size = $3 \times 3$), 

BatchNorm2d(channel = $16$), 

Relu(), 

Shortcut()$\}$ $\times 6$
\\
\hline
Residual Block 2 & Conv2d (input channel = $16$, output channel = $32$, kernel size = $3 \times 3$), 

BatchNorm2d(channel = $16$), 

Relu(), 

Shortcut(), 

$\{$Conv2d (input channel = $32$, output channel = $32$, kernel size = $3 \times 3$), 

BatchNorm2d(channel = $32$), 

Relu(), 

Shortcut()$\}$ $\times 5$  \\
\hline
Residual Block 3 & Conv2d (input channel = $32$, output channel = $64$, kernel size = $3 \times 3$), 

BatchNorm2d(channel = $64$), 

Relu(), 

Shortcut(), 

$\{$Conv2d (input channel = $64$, output channel = $64$, kernel size = $3 \times 3$),

BatchNorm2d(channel = $64$), 

Relu(), Shortcut()$\}$ $\times 5$   \\
\hline
Final Layer & Fully Connected $(64, 10)$  \\
\hline
\end{tabular}
\end{center}
\label{table:resnet}
\end{table}

Based on the configuration of residual neural network shown in Tab.~\ref{table:resnet}, we construct $4$ embedding functions applied at input space, outputs of initial layer, residual block $1$ and residual block $2$. The output of residual block $3$ is embedded with a linear orthogonal projection. We randomly select $5000$ clean training data to collect state trajectories at all $5$ locations.
\begin{itemize}[leftmargin=*]
    \item For the linear orthogonal projections: we apply the Principle Component Analysis on each of the state collections. We retain the first $r$ columns of the resulted basis, such that $r = \arg \min \{ i: \frac{\lambda_1 + \ldots + \lambda_i} {\lambda_1 \ldots + \lambda_d} \geq 1 - \delta \} $, where $\delta = 0.1$.
    \item For the nonlinear embedding: we train $4$ convolutional auto-encoders for the input space, outputs of the initial layer and residual blocks $1, 2$. All of the embedding functions are trained individually. We adopt shallow convolutional auto-encoder structure to gain fast inference speed, in which case, CLC-NN equipped with linear embedding often outperform the nonlinear embedding as shown in Tab.~\ref{cifar_baseline}. The configuration of all $4$ convolutional auto-encoders are shown in Tab.~\ref{table:conv}. 
\end{itemize}

\begin{table}[t]
\caption{Convolutional Auto-Encoders}
\begin{center}
\begin{tabular}{ P{2cm}|P{11cm}}
\hline
\hline
Structure & Configuration  \\
\hline
Encoder & Conv2d (input channel = $c_1$, output channel = $c_2$, kernel size = $4 \times 4$, stride = $2 \times 2$, padding = $1 \times 1$ ),

ELU(alpha=$1$), 

BatchNorm2d(channel = $c_2$),

Conv2d (input channel = $c_2$, output channel = $c_3$, kernel size = $4 \times 4$, stride = $2 \times 2$, padding = $1 \times 1$ ),

ELU(alpha=$1$).
\\
\hline
Decoder &  ConvTranspose2d (input channel = $c_3$, output channel = $c_2$, kernel size = $4 \times 4$, stride = $2 \times 2$, padding = $1 \times 1$), 

ELU(alpha=$1$),

ConvTranspose2d (input channel = $c_2$, output channel = $c_1$, kernel size = $4 \times 4$, stride = $2 \times 2$, padding = $1 \times 1$), 
\\
\hline
\end{tabular}

\begin{tabular}{ P{2cm}|P{2cm}|P{2cm}|P{2cm}|P{2cm}}
\hline
Auto-encoder Index & 0 & 1 & 2 & 3  \\
\hline
Channel Dimensions [$c_1$, $c_2$, $c_3$] & [$3, 18, 36$] & [$16, 36, 72$] & [$16, 36, 72$] & [$32, 128, 256$] \\
\hline
\end{tabular}
\end{center}
\label{table:conv}
\end{table}

\subsection{Perturbations and Defensive Training}
In this section, we show details about the perturbations and robust networks that have been considered in this work. For the adversarial training objective function, 
\begin{equation*}
    \min \limits_{\boldsymbol\theta \in \Theta}
    \max \limits_{\mat{x}_{\epsilon, 0} = \Delta(\mat{x}_0, \epsilon)} \mathop{\mathbb{E}}_{(\mat{x}_0, \mat{y}) \sim \mathcal{D}} [(1 - \lambda) \cdot \Phi_i(\mat{x}_{\epsilon, T}, \mat{y}, \boldsymbol\theta) + \lambda \cdot \Phi_i(\mat{x}_{T}, \mat{y}, \boldsymbol\theta)],
\end{equation*}
where $\Delta(\mat{x}_0, \epsilon)$ generates a perturbed data from given input $\mat{x}_0$ within the range of $\epsilon$. $\lambda$ balances between standard accuracy and robustness. We choose $\lambda = 0.5$ in all adversarial training. 

\paragraph{For robust networks,} we consider both perturbation agnostic and non-agnostic methods. For the perturbation agnostic adversarial training algorithms equipped $\Delta(\mat{x}_0, \epsilon)$, the resulted network that is the most robust against the $\Delta(\mat{x}_0, \epsilon)$ perturbation. On the contrary, perturbation non-agnostic robust training methods are often robust against many types of perturbations.
\begin{itemize}[leftmargin=*]
    \item Adversarial training with the fast gradient sign method (FGSM) \citep{goodfellow2014explaining} considers perturbed data as follows.
    \begin{equation*}
        \mat{x}_{\epsilon, 0} = \mat{x}_0 + sign(\nabla_{\mat{x}_0} \Phi(\mat{x}_T, \mat{y})), \hspace{0.3cm} (\mat{x}_0, y) \sim \mathcal{D},
    \end{equation*}
    where $sign(\cdot)$ outputs the sign of the input. In which case, FGSM considers the worse case within the range of $\epsilon$ along the gradient $\nabla_{\mat{x}_0} \Phi(\mat{x}_T, \mat{y})$ increasing direction. Due to the worse case consideration, it does not scale well for deep networks, for this reason, we adversarially train the network with FGSM with $\epsilon=4$, which is half of the maximum perturbation considered in this paper.
    \item The label smoothing training (Label Smooth) \citep{hazan2017adversarial} does not utilize any perturbation information $\Delta(\mat{x}_0, \epsilon)$. It converts one-hot labels into soft targets by setting the correct class as $1 - \epsilon$, while other classes have value of $\frac{\epsilon} {N - 1}$, where $\epsilon$ is a small constant and $N$ is number of classes. Specifically, we choose $\epsilon = 0.9$ in this paper.
    \item Adversarial training with the project gradient descent (PGD)  \citep{madry2017towards} generates adversarial data by iteratively run FGSM with small step size, which results in stronger perturbations compared with FGSM within the same range $\epsilon$. We use $7$-step of $\epsilon=2$ to generate adversarial data for robust training.
\end{itemize}

\paragraph{For Perturbations,} we consider the maximum range of $\epsilon = 2, 4, 8$ to test the performance the network robustness against both strong and weak perturbations. For this work, we test network robustness with the manifold-based attack \citep{jalal2017robust}, FGSM \citep{goodfellow2014explaining}, $20$-step of PGD \citep{madry2017towards} and the CW attack \citep{carlini2017towards}.

\subsection{Online Optimization}
\paragraph{Optimization Methods.} we use Adam \citep{kingma2014adam} to maximize the Hamiltonian Eq.~(\ref{eq:pmp_maximization}) with default setting. In which case, solving the PMP brings in extra computational cost for inference.

Each online iteration of solving the PMP requires a combination of forward propagation (Eq. (\ref{eq:pmp_forward})), backward propagation (Eq. (\ref{eq:pmp_backward})) and a maximization w.r.t. the control parameters (Eq.~(\ref{eq:pmp_maximization})), which has computational cost approximately the same as performing gradient descent on training a neural network for one iteration. For the numerical results presented in the paper, we choose the maximum iteration that gives the best performance from one of $[5, 10, 20, 30, 50]$.

\section{More Numerical Experiments}
The proposed CLC-NN is designed to be compatible with existing open-loop trained. We show extra experiments by employing the proposed CLC-NN on two baseline models, DenseNet-$40$ (Table \ref{cifar_densenet}).

\begin{table}[t]
\caption{Experimental results on DenseNet-40 from standard training.}
\vspace{-10pt}
\begin{center}
\small
\begin{tabular}{ P{1.0cm}|P{0.5cm}|P{1.9cm}P{1.9cm}P{1.9cm}P{1.9cm}P{1.9cm}  }
\thickhline
& & \multicolumn{5}{c}{Accuracy: original without CLC / CLC-NN + Nonlinear Embedding } \\
\cline{3-7}
Dataset & $\epsilon$ & \multicolumn{5}{c}{Type of input perturbations}\\
& & None & Manifold &  FGSM  & PGD & CW \\
\hline
\multirow{3}{1.0cm}{CIFAR-10} & 2 & \multirow{3}{1.6cm} {\textcolor{blue}{92}~/~89} & 19~/~\textcolor{blue}{67} & 27~/~\textcolor{blue}{57} & 0~/~\textcolor{blue}{52} & 5~/~\textcolor{blue}{79} \\
& 4 &  & 4~/~\textcolor{blue}{56} & 20~/~\textcolor{blue}{48} & 0~/~\textcolor{blue}{37} & 0~/~\textcolor{blue}{79}  \\
& 8 &  & 1~/~\textcolor{blue}{59} & 15~/~\textcolor{blue}{33} & 0~/~\textcolor{blue}{17} & 0~/~\textcolor{blue}{79} \\
\hline
\multirow{3}{1.0cm}{CIFAR-100}& 2 & \multirow{3}{1.6cm} {\textcolor{blue}{70}~/~64} & 8~/~\textcolor{blue}{25} & 8~/~\textcolor{blue}{24} & 0~/~\textcolor{blue}{25} & 3~/~\textcolor{blue}{47} \\
& 4 &  & 2~/~\textcolor{blue}{24} & 4~/~\textcolor{blue}{15} & 0~/~\textcolor{blue}{10} & 0~/~\textcolor{blue}{45} \\
& 8 &  & 1~/~\textcolor{blue}{24} & 3~/~\textcolor{blue}{7} & 0~/~\textcolor{blue}{5} & 0~/~\textcolor{blue}{45} \\
\hline
\end{tabular}\normalsize
\end{center}
\label{cifar_densenet}
\vspace{-10pt}
\end{table}

The layer-wise projection performs orthogonal projection on the hidden state. We define the local cost function at the $t^{th}$ layer as follows
\begin{equation*}
    J(\mat{x}_t, \mat{u}_t) = \frac{1} {2} \lVert \mat{Q}_t(\mat{x}_t + \mat{u}_t) \rVert_2^2 + \frac{c} {2} \lVert \mat{u}_t \rVert_2^2,
\end{equation*}
the layer-wise achieves the optimal solution at local time $t$,
\begin{equation*}
    \mat{u}_t^{\ast} (\mat{x}_t) = \arg \min \limits_{\mat{u}_t} J(\mat{x}_t, \mat{u}_t).
\end{equation*}
However, the layer-wise optimal control solution does not guarantee the optimum across all layers. In Table \ref{layer_projection}, we launch comparisons between the proposed CLC-NN with layer-wise projection. Specifically, under all perturbations the proposed CLC-NN outperforms layer-wise projection.

\begin{table}[t]
\caption{Comparison between CLC-NN and layer-wise projection.}
\vspace{-10pt}
\begin{center}
\small
\begin{tabular}{ P{1.0cm}|P{0.5cm}|P{1.9cm}P{1.9cm}P{1.9cm}P{1.9cm}P{1.9cm}  }
\thickhline
& & \multicolumn{5}{c}{Accuracy: Layer-wise projection / CLC-NN + Linear embedding } \\
\cline{3-7}
Dataset & $\epsilon$ & \multicolumn{5}{c}{Type of input perturbations}\\
& & None & Manifold &  FGSM  & PGD & CW \\
\hline
\multirow{3}{1.0cm}{CIFAR-10} & 2 & \multirow{3}{1.6cm} {\textcolor{blue}{92}~/~88} & 68~/~\textcolor{blue}{79} & 29~/~\textcolor{blue}{56} & 4~/~\textcolor{blue}{50} & 51~/~\textcolor{blue}{75} \\
& 4 &  & 64~/~\textcolor{blue}{78} & 15~/~\textcolor{blue}{40} & 0~/~\textcolor{blue}{31} & 50~/~\textcolor{blue}{75}  \\
& 8 &  & 64~/~\textcolor{blue}{78} & 10~/~\textcolor{blue}{20} & 0~/~\textcolor{blue}{11} & 49~/~\textcolor{blue}{76} \\
\hline
\multirow{3}{1.0cm}{CIFAR-100}& 2 & \multirow{3}{1.6cm} {\textcolor{blue}{67}~/~60} & 45~/~\textcolor{blue}{51} & 13~/~\textcolor{blue}{25} & 2~/~\textcolor{blue}{17} & 35~/~\textcolor{blue}{47} \\
& 4 &  & 44~/~\textcolor{blue}{50} & 8~/~\textcolor{blue}{15} & 0~/~\textcolor{blue}{6} & 34~/~\textcolor{blue}{47} \\
& 8 &  & 44~/~\textcolor{blue}{50} & 5~/~\textcolor{blue}{9} & 0~/~\textcolor{blue}{1} & 34~/~\textcolor{blue}{47} \\
\hline
\end{tabular}\normalsize
\end{center}
\label{layer_projection}
\vspace{-10pt}
\end{table}

\section{Robustness Against Manifold-based Attack}
\label{manifold_based}
The manifold-based attack \citep{jalal2017robust} (denoted as Manifold) has shown great success on breaking down the manifold-based defenses \citep{samangouei2018defense}. The proposed CLC-NN can successfully defend such specifically design adversarial attack for manifold-based defenses and improves the robustness accuracy from $1 \%$ to $81 \%$ for the standard trained model in Cifar-$10$, and $2 \%$ to $52 \%$ in Cifar-$100$.

We provide detailed explanation for the successful defense of the proposed CLC-NN against such strong adversarial attack. Exsiting manifold-based defense \citep{samangouei2018defense} focuses on detecting and de-noising the input components that do not lie within the underlying manifold. The overpowered attack proposed in \cite{jalal2017robust} searches adversarial attack with in the embedded latent space, which is undetectable for the manifold-based defenses and caused complete failure defense.

In the real implementation, the manifold-based attack \citep{jalal2017robust} is detectable and controllable under the proposed framework due to the following reason. \textbf{The numerically generated manifold embedding functions are not ideal.} The error sources of non-ideal embedding functions are mainly due to the algorithm that used to compute the manifold, the architecture of embedding function, and the distribution shift between training and testing data (embedding functions of training data do not perfectly agree with testing data). In which case, even the perturbation is undetectable and non-controllable at initial layer, as it is propagated into hidden layers, each layer amplifies such perturbation, therefore, the perturbation becomes detectable and controllable in hidden layers.

We randomly select the batch of testing data to generate the manifold-based attack following the same procedure proposed in \cite{jalal2017robust}. The proposed method improves the attacked accuracy from $1 \%$ to $78 \%$. More specifically, we compare the differences of all hidden states spanning the orthogonal complement between a perturbed testing data and its unperturbed counterpart, $\Vert \mat{P}_t^{\perp} \mat{x}_{\epsilon, t} - \mat{P}_t^{\perp} \overline{\mat{x}}_{\epsilon, t} \rVert$, where $\mat{P}_t^{\perp}$ is a projection onto the orthogonal complement. The difference is growing such as $0, 0.016, 0.0438, 0.0107, 0.0552$ for hidden states at layer $0, 1, 2, 3, 4$ respectively. This validates the argument for how the proposed method is able to detect such perturbation and controls the perturbation in hidden layers.

Furthermore, we provide some insights about the reasons behind the success of such an adversarial attack. This follows the same concept of the existence of adversarial attack in neural networks. The highly nonlinear behaviour of neural networks preserves complex representative ability, meanwhile, its powerful representation results in its vulnerability. For example, a constant function has $50 \%$ chance to make a correct prediction in binary classification problem under any perturbation, but its performance is limited. Therefore, we propose to use a linear embedding function that compensates between the embedding accuracy and robustness.

\section{Definition of threat model}
Generally, an attacker should not have access to the hidden states during inference, in which case, an attacker is not allowed to  inject extra noise during inference. To define the threat model of the proposed method, for the white-box setting, an attacker has access to both network and {\it all} embedding functions. The condition that the perturbation $\epsilon \cdot \mat{z}$ makes our method vulnerable is defined as follows,
\begin{equation*}
    \sum_{t=0}^{T-1} \lVert \mathcal{E}_t(\mat{x}_{\epsilon, t}) - \mat{x}_{\epsilon, t} \rVert_2^2 = 0, \hspace{0.3cm} \mat{x}_{\epsilon, 0} = \mat{x}_0 + \epsilon \cdot \mat{z}.
\end{equation*}
In words, the perturbation $\epsilon \cdot \mat{z}$ applied on the input data must result in $0$ reconstruction losses across all hidden layers, which means its corresponding state trajectory does not span any of all orthogonal complements of all hidden state spaces. To obtain an effective attack satisfying the above equation, conventional gradient-based attackers cannot guarantee to find an perfect attack. A possible way is to perform grid-search backward in layers to find such an adversarial attack satisfying the thread model condition, which is extremely costly.

\end{document}